\documentclass[lettersize,journal]{IEEEtran}
\usepackage{graphicx}
\usepackage{amssymb}
\usepackage{setspace} 

\usepackage{booktabs}
\usepackage{booktabs,makecell,multirow}

\usepackage{booktabs}
\usepackage{hyperref}

\usepackage{widetext}
\usepackage{bm}
\usepackage{color, xcolor}
\usepackage{transparent}
\usepackage{import}
\usepackage[ruled]{algorithm2e}
\usepackage{amsmath}
\usepackage{amsthm}
\allowdisplaybreaks[4]
\usepackage{algorithmic}
\makeatletter

\newcommand{\Rmnum}[1]{\expandafter\@slowromancap\romannumeral #1@}
\makeatother
\usepackage{subfigure} 
\usepackage{threeparttable}

\usepackage{color}
\usepackage{transparent}
\usepackage{graphicx}
\usepackage{import}

\newtheorem{theorem}{Theorem}

\ifCLASSINFOpdf
\else
\fi
 \usepackage{graphicx}
\usepackage{CJKutf8}
\usepackage[T1]{fontenc}
\usepackage{bm}
\usepackage{cite}
\usepackage{amssymb}
\usepackage{amsmath}
 \usepackage{threeparttable}
\begin{document}
	\begin{CJK}{UTF8}{gbsn}
%
\title{Efficient Resource Optimization for Split Federated Learning}
\markboth{IEEE Transactions on  Mobile Computing}%
{Shell \MakeLowercase{\textit{et al.}}: Bare Demo of IEEEtran.cls for IEEE Journals}
%



	
 \author{Wei Wei,~\IEEEmembership{Graduate Student Member,~IEEE}
	and~Xianhao~Chen,~\IEEEmembership{Member,~IEEE}




\thanks{The work was supported in part by the Research Grants Council of Hong Kong under Grant 27213824 and CRS HKU702/24.  
\textit{(Corresponding author: Xianhao Chen)}
}
\thanks{ Wei Wei and Xianhao Chen are with the Department of Electrical and Computer Engineering, the University of Hong Kong, Pok Fu Lam, Hong Kong SAR, 999077. (e-mail: weiwei@eee.hku.hk; 
xchen@eee.hku.hk).
}
}

\maketitle

\begin{abstract}
Split federated learning (SFL) has emerged as a powerful paradigm for model training at the edge. However, SFL inherently involves discrete decision variables for model splitting and resource allocation, resulting in a challenging mixed-integer problem. Consequently, prior optimization schemes for SFL are either \textit{heuristic} or \textit{computationally inefficient}, which cannot handle large-scale user populations. To address this limitation, this work establishes an efficient optimization framework for SFL under resource-constrained networks. Our framework jointly optimizes model splitting and resource allocation to minimize training cost, which is defined as the weighted sum of latency and energy costs. We first study the model splitting problem and develop a polynomial-time algorithm that achieves the global optimum. Then, we extend the approach to the joint model splitting and resource allocation problem. In this case, we formulate it as a two-dimensional master problem and develop an efficient approximation method with a $(1+\epsilon)$-approximation guarantee. Extensive experiments show that the proposed approach provides efficient solutions to strike the optimal energy--latency tradeoff.
\end{abstract}

\begin{IEEEkeywords}
Mobile edge computing (MEC), split federated learning, model splitting, resource allocation.
\end{IEEEkeywords}

%
\IEEEpeerreviewmaketitle

\section{Introduction}
Edge learning enables artificial intelligence (AI) computation close to the data source, enhancing privacy, saving bandwidth, and enabling personalization of edge devices~\cite{  10040976,10843329,11482856,ZW_spectrum,10835069}. However, the rapid expansion of AI models presents unprecedented computational challenges. State-of-the-art learning is inherently incompatible with the limited hardware resources of edge devices. For instance, even powerful edge devices such as the NVIDIA Jetson AGX Xavier struggle to train models with more than 50 million parameters~\cite{jetson-agx}. Given these challenges, split federated learning (SFL)~\cite{vepakomma2018split, lin2023split,10934144}, which utilizes a client-server architecture, presents a promising solution. In SFL, the AI model is divided into two segments: the initial layers are trained on clients over local data, whereas the subsequent layers are trained on a server that provides more powerful computing resources. Moreover, a federated server (or simply Fed server) periodically aggregates client-side sub-models, akin to the federated learning (FL) paradigm~\cite{9084352,konevcny2016federated, 10233897}, to synchronize models across multiple clients in parallel. SFL alleviates the hardware constraints of clients by offloading the most demanding computations to more powerful servers, thereby recognized as a promising framework to enable large-scale model training on edge devices.


\begin{figure}[t!]

         \subfigure[\centering Illustration of SFL framework.] {
\centering\includegraphics[width=4.1cm]{ 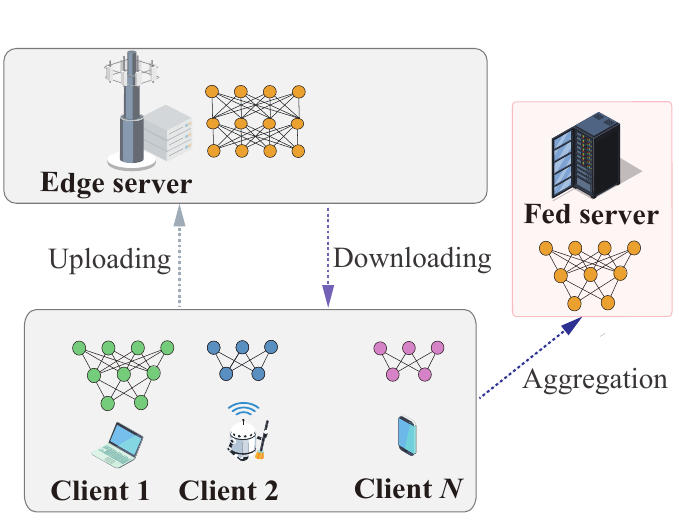}
	}
         \subfigure[\centering Illustration of client-side idle time.] {
\centering\includegraphics[width=4.1cm]{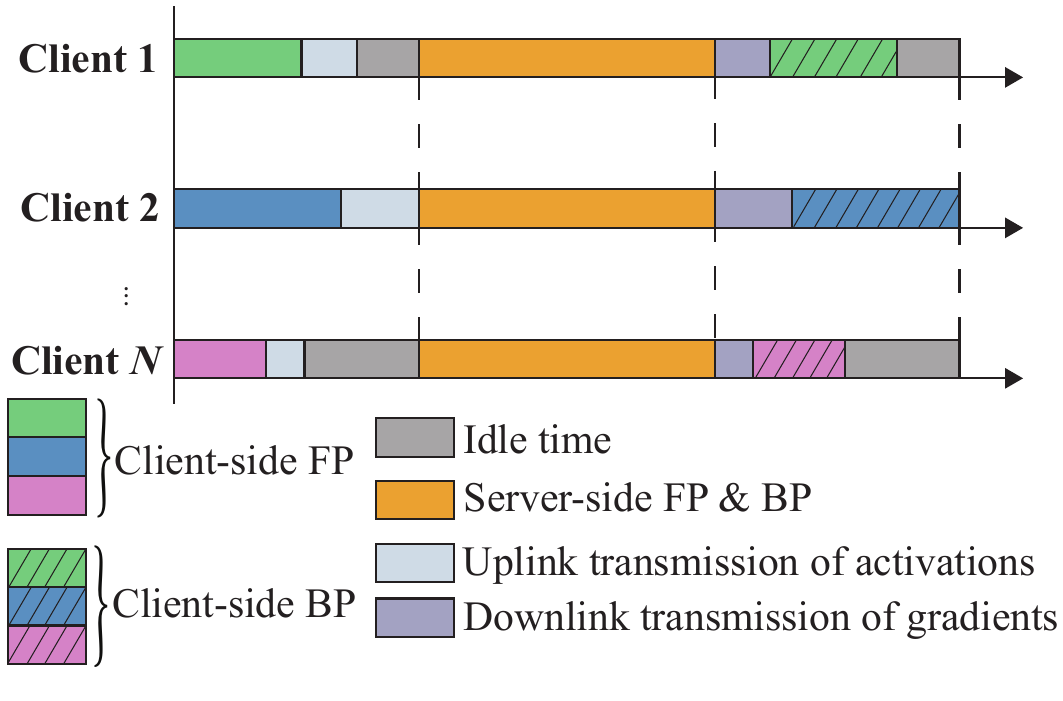
        }

}

	\centering\caption{\textcolor{black}{Illustration of SFL workflow. (a)  SFL scenario includes uploading, downloading and aggregation processes. (b) Since the server starts only after all clients finish local FP and activation uploading, and the next round begins only after all clients complete BP and gradient downloading, moderately reducing the GPU frequency and transmission power of faster clients does not affect the overall training time but can save energy. } }
		\label{sfig:motivation}
\end{figure}

As with other edge learning paradigms, resource optimization is essential for SFL, since its performance critically depends on how computation, communication, and model-splitting decisions are orchestrated under limited network and computing resources~\cite{11049940,10980018,11447454,11458847}. 
On the client side, heterogeneous communication and computing capabilities, combined with model-splitting choices, often create a straggler bottleneck where the slowest clients dominate the overall training time (see Fig.~\ref{sfig:motivation}(b)). Mitigating this requires strategic resource management. First, \textit{optimizing power and computing resource allocation} for heterogeneous clients can save energy while prioritizing resources for slower clients to accelerate training. Second, the \textit{model splitting decision} determines how computation and communication loads are distributed between the client and server, thereby affecting both energy consumption and end-to-end latency, as shown in Fig.~\ref{sfig:motivation1}. Thus, realizing efficient SFL at the edge requires the joint optimization of computation, communication, and model-splitting decisions.

In recent years, the SFL literature has addressed resource allocation for improving training efficiency, covering client/server workload coordination~\cite{10522472,9839238,10298247,9944188}, model splitting~\cite{10706803,11049940,10980018,meng2026asfladaptivemodelsplitting}, wireless resource allocation~\cite{10526451,meng2026asfladaptivemodelsplitting,11219052,10193767,chen2026siftmoesimilarityawareenergyefficientexpert}, and aggregation control~\cite{11049940,10980018}. While these advances have significantly improved SFL performance, two fundamental challenges in SFL remain unresolved: First, although the model-splitting decision is a core component of SFL, existing studies still lack a scalable polynomial-time solution. It remains an open question whether cut-layer optimization in SFL, due to its combinatorial structure, can admit an exact polynomial-time algorithm. Resolving this problem is crucial for enabling scalable SFL optimization with large user populations (e.g., on the order of hundreds of users or more). Second, beyond model partitioning, the performance of SFL is strongly influenced by GPU operating frequencies and transmission powers, which jointly determine the energy–latency trade-off in heterogeneous edge environments. Nevertheless, the joint optimization of model partitioning and resource allocation substantially increases the complexity of the problem, rendering the overall optimization task more challenging.

Motivated by these challenges, this paper aims to answer two research questions: 
\begin{itemize}
    \item[]\textbf{Q1:} Can model-splitting optimization in SFL be solved optimally in polynomial time?  
    \item[]\textbf{Q2:} Can model-splitting decisions, GPU frequency scaling, and transmission power control be jointly optimized within a unified framework? 
\end{itemize}

This paper firmly answers both questions through the following contributions.

\begin{figure}[t!]

\centering
\includegraphics[width=8cm]{ 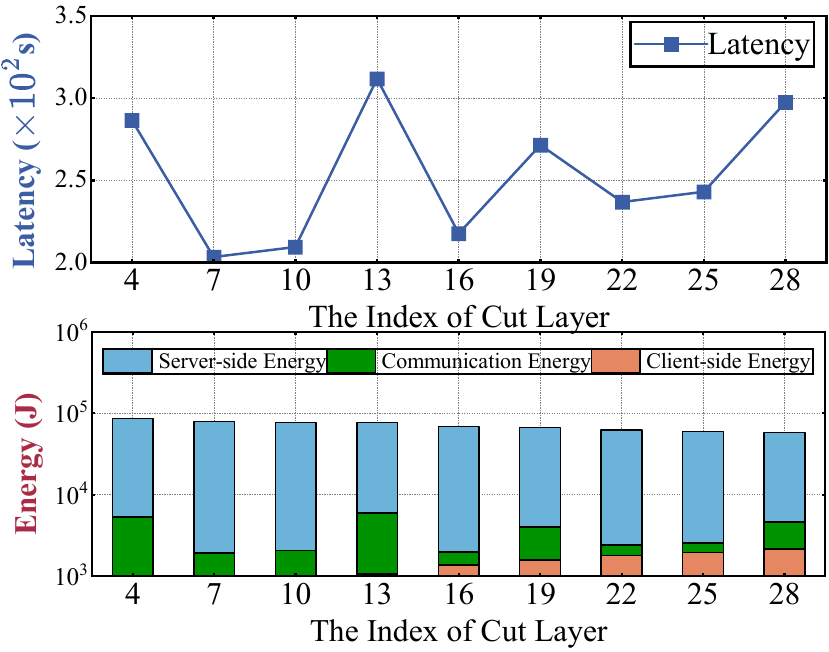}
	\flushleft

        
	\centering\caption{\textcolor{black}{Impact of cut layer on energy and latency.  The experiments are conducted
with ResNet-50 on the CIFAR-10 dataset under non-IID settings. Other configurations are identical to experimental settings in Section \ref{simulations}.
} }
		\label{sfig:motivation1}
\end{figure}

\begin{itemize}
    \item We establish a unified resource optimization framework for SFL by jointly considering model splitting, GPU frequency scaling, and transmission power control.
    
    \item For the model-splitting problem, we derive an optimal polynomial-time solution approach.
    
    \item For the joint model-splitting and resource optimization problem, we formulate it as a two-dimensional master problem and develop a polynomial-time approximation scheme with  a ($1+\epsilon$) - approximation guarantee.
    
    \item Extensive experiments demonstrate that the proposed method consistently outperforms baseline schemes in terms of both latency--energy tradeoff and running-time efficiency.
\end{itemize}

The remainder of this paper is organized as follows. Section \ref{sec:related} reviews related work. Section \ref{sec:system-model} introduces the system model and the proposed resource optimization framework for SFL. Section \ref{sec:problem_formulation} presents the problem formulation. Section \ref{subsec:cutonly} considers a special case with model-splitting optimization and derives a polynomial-time optimal solution.  Section~\ref{solu_appro} investigates the general joint optimization problem, reformulates it as a two-dimensional master problem, and develops an approximation algorithm. Section \ref{simulations}  presents the numerical results. Finally, Section \ref{conclusion} concludes this paper.

\section{Related Work}
\label{sec:related}
\textbf{Resource management in distributed learning and edge AI.}
Recent studies have shown that runtime resource control, especially GPU frequency scaling and power management, can substantially affect the time and performance of deep learning. For example, Drechsler et al.~\cite{drechsler2023energy} show that lowering GPU frequency can significantly reduce power consumption for CNN inference with only marginal impact on computation time. More recent studies extend this observation to training tasks. Maliakel et al.~\cite{maliakel2026characterizingllminferenceenergyperformance} systematically analyze the impact of dynamic voltage and frequency scaling across different models and tasks, while throttLL’eM~\cite{10946751} exploits GPU frequency scaling to improve training efficiency under latency constraints. 
Beyond frequency control, several works have explored other forms of runtime resource control for improving the time--resource tradeoff of deep learning systems. Zeus~\cite{You2023119} jointly controls batch size and GPU power limits for recurring DNN training jobs to better balance training time and energy consumption. EnvPipe~\cite{Choi2023851} improves training efficiency by leveraging idle pipeline intervals and dynamically adjusting streaming multiprocessor frequencies without sacrificing performance. PCCL~\cite{10818209} focuses on distributed training and identifies more resource-efficient GPU frequency settings for collective communication calls, thereby reducing communication overhead with negligible throughput degradation. Perseus~\cite{10.1145/3694715.3695970} further improves the time--resource tradeoff of large-model training by optimizing computation scheduling along the performance frontier.

These studies demonstrate that resource control is an effective mechanism for improving the efficiency of deep learning workloads in both distributed learning and edge AI systems. However, they are primarily designed for conventional training, inference, or distributed GPU platforms. Thus, they either rely on fixed model partitioning or fail to address partitioning across heterogeneous edge devices.

\textbf{Resource management in SFL.}
Recent studies on SFL have increasingly focused on resource allocation for improving training efficiency in heterogeneous edge environments. For example, ESFL~\cite{10522472} optimizes client-side workload distribution and server-side resource allocation over heterogeneous wireless devices, while Wu \textit{et al.}~\cite{10526451} and HMSFL~\cite{10706803} further incorporate model splitting, communication/computing resource allocation, and hierarchical split designs to improve training efficiency. In addition, AdaptSFL and HSFL~\cite{11049940,10980018} show that split decisions and aggregation intervals can be adjusted to better balance communication and computation during training, thereby affecting communication overhead, local computation burden, and convergence speed. More recently, ASFL~\cite{meng2026asfladaptivemodelsplitting} combines adaptive model splitting with wireless resource allocation, and Wang \textit{et al.}~\cite{11219052} further consider resource-constrained edge environments with unreliable wireless links. Nevertheless, most existing approaches focus on latency reduction, without considering a unified latency-energy tradeoff. More importantly, the joint optimization of model splitting and resource allocation remains largely open, where polynomial-time algorithms with provable approximation guarantees are still lacking.

\section{System Model}
\label{sec:system-model}
\begin{table}[t]\label{notation}
\caption{{Summary of Important Notations.}}
{\small 
  \renewcommand{\arraystretch}{1.2}{
  \setlength{\tabcolsep}{0.1mm}{
\begin{tabular}{ll}
			\toprule
			\toprule
\textbf{Notation}                                                              & ~~~\textbf{Description}  \\ \hline
~~~$\mathcal{N}$ &  ~~~The set of all clients     \\
~~~$N$ &  ~~~The number of clients     \\
~~~$L$           &  ~~~The number of layers in the global model  \\
~~~$L_{c,i}$           &  ~~~The cut layer of client $i$   \\~~~$n_i$ & ~~~The number of  GPU FLOPs per cycle of client $i$
\\~~~$n_{\mathrm{S}}$ & ~~~The number of  GPU FLOPs per cycle of the edge server
\\~~~$f_{i,j}$ & ~~~The clock frequency of client $i$’s GPU
\\~~~$f_{\mathrm{S},j}$ & ~~~The clock frequency of the edge server’s GPU\\
$a_i(L_{c,i})$& ~~~The activations of client $i$ at the $L_{c,i}$-th cut layer \\
~~~$\rho_l$& ~~~The FP computational cost  of the $l$-th DNN layer \\
~~~$\varpi_l$& ~~~The BP computational cost  of the $l$-th DNN layer \\
~~~$B_i^{\mathrm{U}}$& ~~~The uplink bandwidth allocated to client $i$ \\
~~~$B^{\mathrm{D}}_i$ & ~~~The downlink bandwidth allocated to client $i$ \\
~~~$P^{\mathrm{TX}}_{i,j}$& ~~~The transmit power of client $i$ at the $j$-th stage \\
~~~$P^{\mathrm{TX}}_{\mathrm{S},j}$& ~~~The transmit power of the edge server at the $j$-th stage \\
~~~$h_i$& ~~~The uplink channel fading coefficient  \\
~~~$h_{\mathrm{S}}$& ~~~The downlink channel fading coefficient  \\
~~~$N_0$& ~~~The noise power spectral
density  \\
$g_i(L_{c,i})$& ~~~The gradients of client $i$ at the $L_{c,i}$-th cut layer  \\
~~~$G_i$& ~~~The coefficient  depends on the chip  of client $i$ 
\\
~~~$G_{\mathrm{S}}$& ~~~The coefficient depends on the chip  of the edge server
\\
~~~$u_{i,j}$& ~~~The control input of client $i$ at the $j$-th stage
\\
~~~$\lambda$& ~~~Balances energy efficiency and training latency
\\ \bottomrule
\end{tabular}}}
}
\end{table}
\subsection{SFL Architecture}
We consider an SFL system with a set of clients $\mathcal{N}=\{1,\dots,N\}$, an edge server that executes the server-side sub-model, and a federated server that periodically aggregates model parameters. 
For each client $i$, the global DNN model is partitioned into a \textit{client-side sub-model}, 
which comprises the initial DNN layers up to the cut layer indexed by $L_{c,i}$, 
and a \textit{server-side sub-model}, which includes the remaining DNN layers from $L_{c,i}+1$ to the final output layer. 
Client $i$ performs local forward pass (FP) up to the cut layer and transmits the corresponding intermediate activations to the edge server, which continues the remaining computation and returns the cut-layer gradients to the client for completing backward pass (BP).

\subsection{Stages of SFL}
Within one SFL round, each client $i$ performs a five-stage pipeline\footnote{After every $I$ local training rounds, the federated server performs model aggregation to update the global parameters across all clients.
The aggregation process is executed on a much slower timescale than the intra-round SFL pipeline.
Thus, aggregation time can be appended as a constant offset at the end of $I$ local training rounds and the resource optimization of model aggregation is omitted in this paper.} that captures both computation and communication operations. 
Specifically, the stages include:  
1) client-side FP,
2) uplink transmissions of activations,
3) server-side FP and BP,
4) downlink transmissions of gradients, and
5) client-side BP. We denote these stages as $\mathcal{J}_i^{\mathrm{FP}}$, $\mathcal{J}_i^{\mathrm{U}}$, $\mathcal{J}_i^{\mathrm{FP+BP}}$, $\mathcal{J}_i^{\mathrm{D}}$, and $\mathcal{J}_i^{\mathrm{BP}}$, respectively.

\begin{enumerate}

\item \textbf{Client-side FP.}
Client $i$ executes the first $L_{c,i}$ DNN layers on its local data,
producing activations $a_i(L_{c,i})$ at the cut layer.
The FP time of client $i$ is modeled as
\begin{equation}
s_{i,j}(f_{i,j},L_{c,i})
= \sum_{l=1}^{L_{c,i}} \frac{\rho_l}{f_{i,j} n_i}, \quad j=1,
\end{equation}
where $\rho_l$ denotes the FP computational cost (in FLOPs) of the $l$-th DNN layer, $n_i$ denotes the number of  GPU
FLOPs per cycle and $f_{i,j}$ represents the clock frequency of client $i$'s GPU.

\item \textbf{Uplink transmissions of activations.}
Client $i$ transmits the activations $a_i(L_{c,i})$ to the edge server. 
The data size of $a_i(L_{c,i})$ is denoted by $d^{\mathrm{U}}_i(L_{c,i}) = \text{bits}(a_i(L_{c,i}))$, and the corresponding transmission time is given by
\begin{equation}
s_{i,j}(P_{i,j}^{\mathrm{TX}},L_{c,i})
= \frac{d_i^{\mathrm{U}}(L_{c,i})}
{B_i^{\mathrm{U}}\log\Big(1+\frac{P_{i,j}^{\mathrm{TX}}|h_i|^2}{B_i^{\mathrm{U}}N_0}\Big)},
\quad j=2.
\end{equation}
Here, 
$B_i^{\mathrm{U}}$ denotes the uplink bandwidth allocated to client $i$, 
$P^{\mathrm{TX}}_{i,j}$ is the transmit power of client $i$ at the $j$-th stage, 
$h_i$ denotes the uplink channel fading coefficient, $h_i^2$ is the corresponding channel gain, 
and $N_0$ is the noise power spectral density.

\item \textbf{Server-side FP and BP.}
Upon receiving $a_i(L_{c,i})$, the edge server performs the remaining forward and backward passes from layer $L_{c,i}+1$ to $L$, and outputs the corresponding gradient $g_i(L_{c,i})$. 
The data size of $g_i(L_{c,i})$ is denoted by $d^{\mathrm{D}}_i(L_{c,i}) = \text{bits}(g_i(L_{c,i}))$. 
The server-side computation time is
\begin{equation}
s_{i,j}(f_{\mathrm{S},j},L_{c,i})
= \sum_{l=L_{c,i}+1}^{L}\frac{\rho_l+\varpi_l}{f_{\mathrm{S},j}n_{\mathrm{S}}},
\quad j=3,
\end{equation}
where 
$\varpi_l$ denotes the BP computational cost (in FLOPs) of the $l$-th DNN layer, 
$f_{\mathrm{S},j}$ is the  clock frequency  of the edge server's GPU, 
and $n_{\mathrm{S}}$ denotes the number of GPU
FLOPs per cycle of edge server.

\item \textbf{Downlink transmissions of gradients.}
The edge server transmits $g_i(L_{c,i})$ back to client $i$ through the downlink channel. 
The data size of $g_i(L_{c,i})$ is denoted by $d^{\mathrm{D}}_i(L_{c,i})$, and the corresponding transmission time is modeled as
\begin{equation}
s_{i,j}(P_{\mathrm{S},j}^{\mathrm{TX}},L_{c,i})
= \frac{d^{\mathrm{D}}_i(L_{c,i})}
{B^{\mathrm{D}}_i\log\Big(1+\frac{P_{\mathrm{S},j}^{\mathrm{TX}}|h_{\mathrm{S}}|^2}{B^{\mathrm{D}}_iN_0}\Big)},
\quad j=4,
\end{equation}
Here, 
$B^{\mathrm{D}}_i$ denotes the downlink bandwidth, 
$P^{\mathrm{TX}}_{\mathrm{S},j}$ is the transmit power of the edge server at the $j$-th stage, and
$h_{\mathrm{S}}$ represents the channel fading coefficient.

\item \textbf{Client-side BP.}
After receiving the gradients from the edge server, 
client $i$ performs BP on its local DNN layers, 
and subsequently updates the parameters of its client-side sub-model. 
The service time for this stage is modeled as
\begin{equation}
s_{i,j}(f_{i,j},L_{c,i})
= \sum_{l=1}^{L_{c,i}}\frac{\varpi_l}{f_{i,j} n_i},
\quad j=5.
\end{equation}

\end{enumerate}

\subsection{Per-round Latency}
Based on the defined five-stage SFL, we consider a \emph{parallel execution model} across clients within each SFL round. 
Under this model,  the  per-round training latency of client-side FP and uplink  is dominated by the slowest client.  Moreover, the edge server maintains a shared server-side sub-model and must process the server-side FP and BP for all clients sequentially. Thus, server-side workloads contribute additively, yielding an aggregate latency term. Similarly,  the per-round training latency of client-side BP and downlink  is dominated by the slowest client.
Accordingly, we define the per-round latency as
\begin{align}
&T_{\mathrm{r}}(f_{i,j}, P_{i,j}^{\mathrm{TX}},f_{\mathrm{S},j},P_{\mathrm{S},j}^{\mathrm{TX}},L_{c,i})=\notag\\
&
\max_{i\in\mathcal{N}}\Big\{s_{i,1}(f_{i,1},L_{c,i})+s_{i,2}(P_{i,2}^{\mathrm{TX}},L_{c,i})\Big\}
+\sum_{i\in\mathcal{N}} s_{i,3}(f_{\mathrm{S},3},L_{c,i})\notag\\
&+\max_{i\in\mathcal{N}}\Big\{s_{i,4}(P_{\mathrm{S},4}^{\mathrm{TX}},L_{c,i})+s_{i,5}(f_{i,5},L_{c,i})\Big\}.
\label{eq:round_latency_parallel}
\end{align}

\subsection{Energy Model}
We next quantify the per-round energy consumption by incorporating both computing and communication energy in the five-stage SFL pipeline.
For each client $i\in\mathcal N$, we define the energy consumed at the $j$-th stage ($j\in\{1,\dots,5\}$) under cut layer $L_{c,i}$ as
\begin{align}
&E_{i,j}(f_{i,j}, P_{i,j}^{\mathrm{TX}},f_{\mathrm{S},j},P_{\mathrm{S},j}^{\mathrm{TX}},L_{c,i})\triangleq \notag\\
&\left\{ 
\begin{array}{ll}
\dfrac{G_i}{{n_i}^3}
\dfrac{\big(\!\sum\limits_{l=1}^{L_{c,i}}\rho_l\big)^3}{
\big[s_{i,j}(f_{i,j},L_{c,i})\big]^2}, 
\ j=1,\\
\dfrac{B^{\mathrm{U}}s_{i,j}(P_{i,j}^{\mathrm{TX}},L_{c,i})N_0}{h_i^2}
\!\left(2^{\frac{d_i^{\mathrm{U}}(L_{c,i})}{
B^{\mathrm{U}}s_{i,j}(P_{i,j}^{\mathrm{TX}},L_{c,i})}}-1\right),\ 
 j=2,\\
\dfrac{G_{\mathrm{S}}}{{n_{\mathrm{S}}}^3}
\dfrac{\left(\sum\limits_{l=L_{c,i}+1}^{L}(\rho_l+\varpi_l)\right)^3}{
\big[s_{i,j}(f_{\mathrm{S},j},L_{c,i})\big]^2},\
 j=3\\
\dfrac{B^{\mathrm{D}}s_{i,j}(P_{\mathrm{S},j}^{\mathrm{TX}},L_{c,i})N_0}{h_{\mathrm{S}}^2}\cdot\left(2^{\frac{d_i^{\mathrm{D}}(L_{c,i})}{
B^{\mathrm{D}}s_{i,j}(P_{\mathrm{S},j}^{\mathrm{TX}},L_{c,i})}}-1\right),\
 j=4,\\
 \dfrac{G_i}{{n_i}^3}
\dfrac{\big(\!\sum\limits_{l=1}^{L_{c,i}}\varpi_l\big)^3}{
\big[s_{i,j}(f_{i,j},L_{c,i})\big]^2},\
 j=5,
\end{array}
\right.
\label{eq:theta_energy1}
\end{align}
where  the coefficients $G_i$ and $G_{\mathrm{S}}$ [in $\text{Watt/(cycle/s)}^3$] depend on the chip architecture of client~$i$ and the edge server~\cite{9461628}.
Accordingly, the total energy consumption of one SFL round is given by
\begin{align}
&E_{\mathrm{r}}(f_{i,j}, P_{i,j}^{\mathrm{TX}},f_{\mathrm{S},j},P_{\mathrm{S},j}^{\mathrm{TX}},L_{c,i})
=\notag\\
&\qquad \qquad  \qquad \qquad  \sum_{i\in\mathcal{N}}\sum_{j=1}^5E_{i,j}(f_{i,j}, P_{i,j}^{\mathrm{TX}},f_{\mathrm{S},j},P_{\mathrm{S},j}^{\mathrm{TX}},L_{c,i}).
\label{eq:round_energy}
\end{align}
\subsection{Control Input of Each SFL Stage}
\label{subsec:ct-dynamics} 
The control input $u_{i,j}(t)$ represents the adjustable resource variables that 
govern the evolution rate $f_{i,j}\!\big(z_{i,j}(t),u_{i,j}(t)\big)$ and thus determine 
the stage service time $s_{i,j}\big(u_{i,j}(t),L_{c,i}\big)$.
Formally,
\begin{equation}
u_{i,j}(t)=
\begin{cases}
f_{i,j}(t), & j\in\{1,5\},\\[3pt]
f_{\mathrm{S},j}(t), & j =3,\\[3pt]
P_{i,j}^{\mathrm{TX}}(t), & j =2,\\[3pt]
P_{\mathrm{S},j}^{\mathrm{TX}}(t), & j =4.
\end{cases}
\label{eq:u_control}
\end{equation}
Here, $f_{i,j}(t)$ and $f_{\mathrm{S},j}(t)$ denote the instantaneous GPU frequencies of 
client $i$ and the edge server, while $P_{i,j}^{\mathrm{TX}}(t)$ and $P_{\mathrm{S},j}^{\mathrm{TX}}(t)$ 
represent their respective uplink and downlink transmit powers. 
The cut-layer index $L_{c,i}$ specifies which subset of DNN layers is 
executed locally and which subset is offloaded to the edge server, thereby influencing the mapping from control inputs 
to the service time $s_{i,j}(\cdot)$.
Moreover, each client and the edge server are subject to bounded GPU frequencies and transmit powers,
\(
f_{i}^{\min} \le f_{i,j} \le f_{i}^{\max}, 
\)
\(f_{\mathrm{S}}^{\min} \le f_{\mathrm{S},j} \le f_{\mathrm{S}}^{\max}, 
\) 
\( P_{i}^{\mathrm{TX},\min} \le P_{i,j}^{\mathrm{TX}} \le P_{i}^{\mathrm{TX},\max}, \)
\(
P_{\mathrm{S}}^{\mathrm{TX},\min} \le P_{\mathrm{S},j}^{\mathrm{TX}} \le P_{\mathrm{S}}^{\mathrm{TX},\max}
\),
which jointly constrain the feasible processing and transmission rates, thereby implying both lower and upper bounds on the resulting service times.

\section{Problem Formulation}
\label{sec:problem_formulation}
Our goal is to jointly optimize the cut-layer decisions $\{L_{c,i}\}$ and the control variables 
$u_{i,j} = \{f_{i,j},P_{i,j}^{\mathrm{TX}},f_{\mathrm{S},j},P_{\mathrm{S},j}^{\mathrm{TX}}\}$
to balance per-round energy consumption and training latency.
Thus, we consider the optimal hybrid control problem with energy-latency tradeoff objective:
\begin{subequations}\label{eq:obj_energy_latency}
\begin{align}
{\textbf{P1}}:\;\min_{\substack{
\{u_{i,j},\,L_{c,i}\}
}}
\quad
& E_{\mathrm{r}}\!\left(u_{i,j},s_{i,j},L_{c,i}\right)
\notag\\
&\qquad+\lambda\, T_{\mathrm{r}}\!\left(u_{i,j},s_{i,j},L_{c,i}\right)
\label{eq:obj_energy_latency_a}\\
\text{s.t. }~&
f_{i}^{\min} \le f_{i,j} \le f_{i}^{\max},
\label{eq:obj_energy_latency_b}\\
&
f_{\mathrm{S}}^{\min} \le f_{\mathrm{S},j} \le f_{\mathrm{S}}^{\max},
\label{eq:obj_energy_latency_c}\\
&
P_{i}^{\mathrm{TX},\min} \le P_{i,j}^{\mathrm{TX}} \le P_{i}^{\mathrm{TX},\max},
\label{eq:obj_energy_latency_d}\\
&
P_{\mathrm{S}}^{\mathrm{TX},\min} \le P_{\mathrm{S},j}^{\mathrm{TX}} \le P_{\mathrm{S}}^{\mathrm{TX},\max}.
\label{eq:obj_energy_latency_e}
\end{align}
\end{subequations}
where $\lambda>0$ controls the tradeoff between energy costs and training latency.

\subsection{Problem Transformation}
By lifting $\{s_{i,j}\}$ as high-level decision variables, we can rewrite $T_{\mathrm r}\left(z_{i,j},u_{i,j},s_{i,j},L_{c,i}\right)$ as $\widehat{T}(\{s_{i,j}\})$.
Specifically, the per-round latency is given by
\begin{align}
&T_{\mathrm r}\!\left(u_{i,j},s_{i,j},L_{c,i}\right)\notag\\
= &\max_{i\in\mathcal{N}}\Big\{s_{i,1}(f_{i,j},L_{c,i})+s_{i,2}(P_{i,j}^{\mathrm{TX}},L_{c,i})\Big\}
+\sum_{i\in\mathcal{N}} s_{i,3}(f_{\mathrm{S},j},\notag\\
&L_{c,i})+\max_{i\in\mathcal{N}}\Big\{s_{i,4}(P_{\mathrm{S},j}^{\mathrm{TX}},L_{c,i})+s_{i,5}(f_{i,j},L_{c,i})\Big\}\notag\\
\triangleq & \widehat{T}\big(\{s_{i,j}(u_{i,j},L_{c,i})\}\big),
\label{eq:Tr_s_form}
\end{align}
where each $s_{i,j}$ is determined by $(u_{i,j},L_{c,i})$ via the stage models.
To handle the $\max(\cdot)$ terms, we introduce epigraph variables $T_1$ and $T_2$ such that
\begin{align}
s_{i,1}+s_{i,2} &\le T_1,\ \forall i\in\mathcal N,\label{eq:epi_T1}\\
s_{i,4}+s_{i,5} &\le T_2,\ \forall i\in\mathcal N.
\label{eq:epi_T2}
\end{align}

Then, minimizing $\lambda\,\widehat{T}(\{s_{i,j}\})$ is equivalent to minimizing
$\lambda(T_1+T_2)+\lambda\sum_{i\in\mathcal N}s_{i,3}$ subject to Eq.~\eqref{eq:epi_T1} and Eq.~\eqref{eq:epi_T2},
because at optimality $T_1=\max_i(s_{i,1}+s_{i,2})$ and $T_2=\max_i(s_{i,4}+s_{i,5})$.

Minimizing the energy cost 
$\psi_{i,j}(u_{i,j},s_{i,j},L_{c,i})$ 
of client~$i$ at stage~$j$ for a given duration 
$s_{i,j}$
constitutes a classical optimal control problem 
with state specified at a fixed terminal time~\cite{bryson2018applied,912126}. 
Let the optimal control input be denoted as $\{\mu_{i,j}^\ast(s_{i,j}),L_{c,i}^*\}$ and the optimal stage cost can be denoted as 
\begin{align}
\Psi_{i,j}(s_{i,j}) =&
\min_{u_{i,j}(t),L_{c,i}} E_{i,j}(u_{i,j},s_{i,j},L_{c,i}) \notag \\
= &\min_{\ell\in\{1,2,..., L_{\max} \}}\min_{u_{i,j}(t)}E_{i,j}\big(u_{i,j},s_{i,j},L_{c,i}{=}\ell\big)  \notag \\
= &\min_{\ell\in\{1,2,..., L_{\max} \}}\min_{u_{i,j}(t)}E_{i,j}^{(\ell)}\big(u_{i,j},s_{i,j}\big)  \notag \\
= &\min_{\ell\in\{1,2,..., L_{\max} \}}\widehat{\Psi}_{i,j}^{(\ell)}\big(s_{i,j}\big) 
\label{eq:stage-opt1}
\end{align}
where  $\ell$ denotes each candidate cut layer and $L_{\max}\leq L$ denotes the maximum admissible cut layer. Thus, the optimal control problem is denoted by
\begin{subequations}
\begin{align}
\textbf{P2}:&
\min_{\substack{\{s_{i,j},y_{i,\ell}\},\\ T_1,\,T_2}}
\sum_{i=1}^{N}\sum_{j=1}^{5}\sum_{\ell=1}^{L_{\max}}
y_{i,\ell}\,\widehat{\Psi}_{i,j}^{(\ell)}(s_{i,j})
+\lambda(T_1+T_2)\notag\\
&+\lambda \sum_{i\in\mathcal{N}} s_{i,3}
\label{eq:P_original_a}
\\
\text{s.t. }
&
s_{i,1}+s_{i,2} \le T_1,\quad \forall i\in\mathcal N,
\label{eq:P_original_b}\\
&
s_{i,4}+s_{i,5} \le T_2,\quad \forall i\in\mathcal N,
\label{eq:P_original_c}\\
&
\sum_{\ell =1}^{L_{\max}} S^{(\ell),\min}_{i,j}\,y_{i,\ell}
\le s_{i,j}
\le \sum_{\ell =1}^{L_{\max}} S^{(\ell),\max}_{i,j}\,y_{i,\ell},
\ \forall i,j,
\label{eq:P_original_d}\\
&
\sum_{\ell=1}^{L_{\max}} y_{i,\ell}=1,\quad
y_{i,\ell}\in\{0,1\},\quad \forall i.
\label{eq:P_original_e}
\end{align}
\end{subequations}
where $y_{i,\ell}$ is a binary cut-layer selection indicator for client $i$, with
$y_{i,\ell}=1$ indicating that client $i$ chooses cut layer $\ell$, and $y_{i,\ell}=0$ otherwise. The
service-time bounds in \eqref{eq:P_original_d} couple this discrete choice with the stage durations
$\{s_{i,j}\}$ by restricting $s_{i,j}$ to the range corresponding to the selected cut layer, and the constraint in \eqref{eq:P_original_e} enforces that each client selects exactly one cut layer.
Moreover, $S^{(\ell),\min}_{i,j}$ and $S^{(\ell),\max}_{i,j}$ are denoted by
\begin{align}
&S^{(\ell),\min}_{i,j}=\notag\\
 &\left\{ 
 \begin{array}{l}
\!\!{\sum_{l=1}^{\ell}\rho_l}/{(n_i f_i^{\max})}, 
 \ j = 1,\ i \in \mathcal{N},\\
{d_i^{\mathrm{U}}(\ell)}/{\bigg(B^{\mathrm{U}}\log_2\!\Bigl(1+\frac{P_i^{\mathrm{TX},\max}h_i^2}{N_0}\Bigr)\bigg)}, 
\ j =2, \ i \in \mathcal{N},\\
\!\!{\sum_{l=\ell+1}^{L}(\rho_l+\varpi_l)}/{(n_{\mathrm S} f_{\mathrm S}^{\max})}, 
\ j =3, \ i \in \mathcal{N}.\\
{d_i^{\mathrm{D}}(\ell)}/{\bigg(B^{\mathrm{D}}\log_2\!\Bigl(1+\frac{P_{\mathrm S}^{\mathrm{TX},\max}h_{\mathrm{S}}^2}{N_0}\Bigr)\bigg)},\ j =4, \ i \in \mathcal{N},\\
\!\! {\sum_{l=1}^{\ell}\varpi_l}/{(n_i f_i^{\max})}
,\ j =5, \ i \in \mathcal{N},\\
\end{array} 
\right.
\label{eq:s_cases_merged_least}
\end{align}
\begin{align}
&S^{(\ell),\max}_{i,j}=\notag\\
 &\left\{ 
 \begin{array}{l}
\!\!{\sum_{l=1}^{\ell}\rho_l}/{(n_i f_i^{\min})}, 
 \ j = 1,\ i \in \mathcal{N},\\
 {d_i^{\mathrm{U}}(\ell)}/{\bigg(B^{\mathrm{U}}\log_2\!\Bigl(1+\frac{P_i^{\mathrm{TX},\min}h_i^2}{N_0}\Bigr)\bigg)}, 
\ j =2, \ i \in \mathcal{N},\\
\!\!{\sum_{l=\ell+1}^{L}(\rho_l+\varpi_l)}/{(n_{\mathrm S} f_{\mathrm S}^{\min})}, 
\ j =3, \ i \in \mathcal{N}.\\
{d_i^{\mathrm{D}}(\ell)}/{\bigg(B^{\mathrm{D}}\log_2\!\Bigl(1+\frac{P_{\mathrm S}^{\mathrm{TX},\min}h_{\mathrm{S}}^2}{N_0}\Bigr)\bigg)},\ j =4, \ i \in \mathcal{N},\\
\!\! {\sum_{l=1}^{\ell}\varpi_l}/{(n_i f_i^{\min})}
,\ j =5, \ i \in \mathcal{N},\\
\end{array} 
\right.
\label{eq:s_cases_merged_least1}
\end{align}


We note that this paper does not explicitly optimize test accuracy. In our prior work~\cite{11049940,10980018,11447454}, we showed that shallower split points generally lead to faster convergence, whereas overly deep split points may degrade learning performance. Motivated by this observation, we restrict the cut-layer design space to
\[
\mathcal L_{\max}\triangleq \{1,2,\ldots,L_{\max}\}.
\]
 Accordingly, the subsequent optimization focuses on the system-level energy--latency tradeoff over the restricted cut-layer set.

To analyze the energy-latency tradeoff, we consider the Pareto efficiency associated with Problem \textbf{P2}. For any feasible decision vector $\xi$, define
\begin{equation}\label{eq:pareto_energy}
E(\xi)\triangleq \sum_{i=1}^{N}\sum_{j=1}^{5}\Psi_{i,j}(s_{i,j}),
\end{equation}
and
\begin{equation}\label{eq:pareto_latency}
T(\xi)\triangleq (T_1+T_2)+\sum_{i\in\mathcal N}s_{i,3},
\end{equation}
where $\{s_{i,j}\}$, $T_1$, and $T_2$ are components of $\xi$.

A feasible solution $\xi^\ast$ is Pareto efficient with respect to $(E,T)$ if there exists no feasible $\hat{\xi}$ such that
$E(\hat{\xi})\le E(\xi^\ast)$ and $T(\hat{\xi})\le T(\xi^\ast)$, with at least one strict inequality. Consider the weighted-sum scalarization
\begin{equation}\label{eq:weighted_sum}
\min_{\xi\in\mathcal X}\; F_{\lambda}(\xi)\triangleq E(\xi)+\lambda T(\xi),\qquad \lambda>0,
\end{equation}
where $\mathcal X$ is the feasible set of \textbf{P2}. For any fixed $\lambda>0$, any global minimizer of \eqref{eq:weighted_sum} is Pareto efficient.
However, varying $\lambda$ does not necessarily recover all Pareto-efficient solutions, because the attainable objective set
\(
\{(E(\xi),T(\xi)):\xi\in\mathcal X\}
\)
is generally non-convex in our setting due to the binary variables $y_{i,\ell}\in\{0,1\}$ and the generally non-convex stage-energy term
$\Psi_{i,j}(s_{i,j})=\min_{\ell}\widehat{\Psi}^{(\ell)}_{i,j}(s_{i,j})$. Hence, \eqref{eq:weighted_sum} may recover only supported Pareto-efficient points.

\section{Special Case: Model-splitting Optimization}
\label{subsec:cutonly}
In this section, we consider model-splitting optimization for SFL, which is an important special case of \textbf{P1}. While the solution space of the integer problem is exponential, we introduce auxiliary variables to divide the joint problem into client subproblems to solve it exactly in polynomial time.
\subsection{Model-splitting Problem Formulation}
We first study a \emph{model-splitting} optimization of the original energy-latency problem in Eq.~\eqref{eq:obj_energy_latency}. 
Specifically, we fix the per-stage
resource controls at prescribed feasible constants, e.g.,
$
P_{i,j}^{\mathrm{TX}}=\hat P_{i,j}^{\mathrm{TX}}$, $
P_{\mathrm{S},j}^{\mathrm{TX}}=\hat P_{\mathrm{S},j}^{\mathrm{TX}}$, $
f_{i,j}=\hat f_{i,j}$, $
f_{\mathrm{S},j}=\hat f_{\mathrm{S},j}$,
where $\hat P_{i,j}^{\mathrm{TX}}$, $\hat P_{\mathrm{S},j}^{\mathrm{TX}}$,
$\hat f_{i,j}$, and $\hat f_{\mathrm{S},j}$ denote prescribed feasible
constants for the client transmit power, server transmit power, client GPU
frequency, and server GPU frequency, respectively.
In this case, for each candidate cut layer ${\ell}\in\{1,\dots,L_{\max}\}$, the corresponding stage time and energy become deterministic constants, denoted by 
\begin{align}
    S_{i,j}^{[{\ell}]}&=s_{i,j}(\hat f_{i,j}, \hat P_{i,j}^{\mathrm{TX}},\hat f_{\mathrm{S},j},\hat P_{\mathrm{S},j}^{\mathrm{TX}},{\ell}),\\
    E_{i,j}^{[{\ell}]}&=E_{i,j}(\hat f_{i,j}, \hat P_{i,j}^{\mathrm{TX}},\hat f_{\mathrm{S},j},\hat P_{\mathrm{S},j}^{\mathrm{TX}},{\ell}).
\end{align}
 Furthermore,
we introduce binary decision variables $y_{i,{\ell}}\in\{0,1\}$ indicating whether client $i$ selects cut layer ${\ell}$, i.e.,
\begin{equation}
\sum_{\ell=1}^{L_{\max}}y_{i,{\ell}}=1,\ y_{i,{\ell}}\in\{0,1\},\quad \forall i\in\mathcal N.
\label{eq:onehot}
\end{equation}
Then the stage time of client $i$ can be rewritten as affine selections:
\begin{align}
s_{i,j}=&\sum_{\ell=1}^{L_{\max}} S_{i,j}^{[\ell]}y_{i,\ell},\quad \forall i,j,\label{eq:affine_time_cutonly}
\end{align}

Substituting \eqref{eq:affine_time_cutonly}  into \eqref{eq:round_latency_parallel}--\eqref{eq:obj_energy_latency}, we obtain
\begin{subequations}
\begin{align}
\textbf{P3}:
\min_{\{y_{i,\ell}\}}
&
\sum_{i\in\mathcal N}\sum_{\ell=1}^{L_{\max}}E_i^{[\ell]}y_{i,\ell}
+\lambda\Big(
\max_{i\in\mathcal N}\sum_{\ell=1}^{L_{\max}}\big(S_{i,1}^{[\ell]}+S_{i,2}^{[\ell]}\big)y_{i,\ell}
\notag\\[-2pt]
&
+\sum_{i\in\mathcal N}\sum_{\ell=1}^{L_{\max}}S_{i,3}^{[\ell]}y_{i,\ell}
+\max_{i\in\mathcal N}\sum_{\ell=1}^{L_{\max}}\big(S_{i,4}^{[\ell]}+S_{i,5}^{[\ell]}\big)y_{i,\ell}
\Big)
\label{eq:Pcut_a}\\
\text{s.t.}\quad
&
\sum_{\ell=1}^{L_{\max}}y_{i,\ell}=1,\quad
y_{i,\ell}\in\{0,1\},\quad
\forall i\in\mathcal N.
\label{eq:Pcut_b}
\end{align}
\end{subequations}
where $E_i^{[\ell]}\triangleq \sum_{j=1}^{5}E_{i,j}^{[\ell]}\ (\forall i,\ \ell)$.
Problem \textbf{P3} in \eqref{eq:Pcut_a} is a 0$-$1 discrete optimization that searches for the best cut layer across clients
with exponentially cut-layer configurations in general. Despite its complexity, we will develop a polynomial-time algorithm to solve it exactly in the subsequent development.
\subsection{Solution Approach}
\label{subsec:cutonly_solution}
We
introduce two variables $T_1$ and $T_2$ satisfying:
\begin{align}
\sum_{\ell=1}^{L_{\max}}\big(S_{i,1}^{[\ell]}+S_{i,2}^{[\ell]}\big)y_{i,\ell}\le T_1,\quad \forall i\in\mathcal N,\\
\sum_{\ell=1}^{L_{\max}}\big(S_{i,4}^{[\ell]}+S_{i,5}^{[\ell]}\big)y_{i,\ell}\le T_2,
\quad \forall i\in\mathcal N.
\label{eq:epigraph_T12}
\end{align}
Then, Problem \textbf{P3} can be equivalently transformed into
\begin{subequations}
\begin{align}
\textbf{P3}^\prime:\;
&\min_{\{y_{i,\ell}\},\,T_1,\,T_2}
\sum_{i\in\mathcal N}\sum_{\ell=1}^{L_{\max}}\Big(E_i^{[\ell]}+\lambda S_{i,3}^{[\ell]}\Big)y_{i,\ell}
+\lambda(T_1+T_2)
\label{eq:Pcut_epigraph_y_noABC_a}\\
\text{s.t.}\quad&
\sum_{\ell=1}^{L_{\max}}y_{i,\ell}=1,\quad
y_{i,\ell}\in\{0,1\},\quad
\forall i\in\mathcal N,
\label{eq:Pcut_epigraph_y_noABC_b}\\
&
\sum_{\ell=1}^{L_{\max}}\big(S_{i,1}^{[\ell]}+S_{i,2}^{[\ell]}\big)y_{i,\ell}\le T_1,
\quad \forall i\in\mathcal N,
\label{eq:Pcut_epigraph_y_noABC_c}\\
&
\sum_{\ell=1}^{L_{\max}}\big(S_{i,4}^{[\ell]}+S_{i,5}^{[\ell]}\big)y_{i,\ell}\le T_2,
\quad \forall i\in\mathcal N.
\label{eq:Pcut_epigraph_y_noABC_d}
\end{align}
\end{subequations}

To obtain the optimal solution, we enumerate the candidate thresholds according to
\begin{align}
T_1\in&\Big\{S_{i,1}^{[\ell]}+S_{i,2}^{[\ell]}\Big\},\\
T_2\in&\Big\{S_{i,4}^{[\ell]}+S_{i,5}^{[\ell]}\Big\},
\label{eq:threshold_candidates_T12}
\end{align}
which are finite sets of cardinality at most $N(L-1)$.
For any fixed $(T_1,T_2)$, client $i$ selects the  cut layer with minimum cost:
\begin{equation}
\ell_i^\ast(T_1,T_2)
=
\arg \min_{\ell\in\mathcal F_i(T_1,T_2)}
\Big(
E_i^{[\ell]}+\lambda S_{i,3}^{[\ell]}
\Big),
\label{eq:inner_choice}
\end{equation}
where the feasible cut-layer set $\mathcal F_i(T_1,T_2)$ is defined as
\begin{equation}
\mathcal F_i(T_1,T_2)\triangleq
\Big\{\ell:\ 
S_{i,1}^{[\ell]}+S_{i,2}^{[\ell]}\le T_1,\ 
S_{i,4}^{[\ell]}+S_{i,5}^{[\ell]}\le T_2
\Big\}.
\label{eq:feasible_set_T12}
\end{equation}

The resulting objective value is
\begin{equation}
J(T_1,T_2)
=
\underbrace{\sum_{i\in\mathcal N}
 \min_{\ell\in\mathcal F_i(T_1,T_2)}
\Big(
E_i^{[\ell]}+\lambda S_{i,3}^{[\ell]}
\Big)}_{G(T_1, T_2):\ \text{Non-increasing}}
+\underbrace{\lambda(T_1+T_2)} _{\substack{H(T_1, T_2): \ \text{Strictly} \\ \text{increasing}}}.
\label{eq:threshold_obj}
\end{equation}
Finally, the globally optimal thresholds are obtained by
\begin{equation}
(T_1^{\ast},T_2^{\ast})
=
\arg\min_{T_1,T_2} J(T_1,T_2),
\label{eq:best_threshold}
\end{equation}
which yields a globally optimal cut-layer decision $\{y_{i,\ell}^\ast\}$ for \eqref{eq:Pcut_a}.

\begin{algorithm}[t]
\caption{Optimal Model-splitting Optimization via Plane Sweep and Pruning}
\label{alg:sweep_pruning}
\KwIn{$\{S_{i,j}^{[\ell]}\},\{E_i^{[\ell]}\},\lambda$. \ $V_i^{[\ell]}\triangleq E_i^{[\ell]}+\lambda S_{i,3}^{[\ell]}$.}
\KwOut{$J^\ast,\{\ell_i^\ast\}$.}

$\mathcal T_1\gets \mathrm{sort}(\{S_{i,1}^{[\ell]}+S_{i,2}^{[\ell]}\})$,  
$\mathcal T_2\gets \mathrm{sort}(\{S_{i,4}^{[\ell]}+S_{i,5}^{[\ell]}\})$\; 
$G_{\min}\gets \sum_{i\in\mathcal N}\min_{\ell\in\{1,\dots,L_{\max}\}} (E_i^{[\ell]}+\lambda S_{i,3}^{[\ell]})$\;
$\mathcal B[t]\gets \{(i,\ell)\mid S_{i,1}^{[\ell]}+S_{i,2}^{[\ell]}=t\}$; $J^\ast\gets \infty$\;

\ForEach{$T_2\in\mathcal T_2$}{\tcp{outer loop: fix $T_2$}
    $c_i\gets\infty,\ \ell_i\gets-1\ (\forall i)$; $G\gets0$; $\text{count}\gets0$\;
    \For{$k\gets1$ \KwTo $|\mathcal T_1|$}{\tcp{inner loop: sweep $T_1$}

        $T_1\gets \mathcal T_1[k]$\;
        \ForEach{$(i,\ell)\in\mathcal B[T_1]$}{\tcp{process pairs introduced by the current $T_1$}

            \If{$S_{i,4}^{[\ell]}+S_{i,5}^{[\ell]}\le T_2 \ \text{and}\ (E_i^{[\ell]}+\lambda S_{i,3}^{[\ell]})<c_i$}{ \tcp{passes $T_2$ check and reduces client-$i$ cost}

                \eIf{$c_i=\infty$}{$\text{count}\gets \text{count}+1$; $G\gets G+(E_i^{[\ell]}+\lambda S_{i,3}^{[\ell]})$;}
                {\tcp{Replace the previous best cut of client $i$}$G\gets G-c_i+(E_i^{[\ell]}+\lambda S_{i,3}^{[\ell]})$;}
                $c_i\gets (E_i^{[\ell]}+\lambda S_{i,3}^{[\ell]})$; $\ell_i\gets \ell$\;
            }
        }
        \If{$\text{count}=|\mathcal N|$}{
            $J\gets G+\lambda(T_1+T_2)$\;
            \If{$J<J^\ast$}{$J^\ast\gets J;\ \{\ell_i^\ast\}\gets\{\ell_i\}$}
            \tcp{Prune the remaining $T_1$ candidates for this fixed $T_2$ (\textbf{Theorem~\ref{thm:global_opt}})}
            \If{$k<|\mathcal T_1|\ \text{and}\ \lambda(\mathcal T_1[k+1]-\mathcal T_1[k])>G-G_{\min}$}{\textbf{break}}
        }
    }
}
\Return{$J^\ast,\{\ell_i^\ast\}$}\;
\end{algorithm}

To solve Problem \textbf{P3}, \textbf{Algorithm~\ref{alg:sweep_pruning}} first constructs the candidate threshold sets
\[
\begin{aligned}
&\mathcal{T}_1 \triangleq \Big\{\, S^{[\ell]}_{i,1}+S^{[\ell]}_{i,2}\; \big|\; i\in\mathcal N,\ \ell\in\{1,\dots,L_{\max}\}\,\Big\},
\\
&\mathcal{T}_2 \triangleq \Big\{\, S^{[\ell]}_{i,4}+S^{[\ell]}_{i,5}\; \big|\; i\in\mathcal N,\ \ell\in\{1,\dots,L_{\max}\}\,\Big\}.
\end{aligned}
\]
The two sets are sorted in ascending order and enumerated as candidate threshold values.
For each fixed $T_2\in\mathcal{T}_2$, the algorithm sequentially sweeps $T_1\in\mathcal{T}_1$,
updates the best feasible cut-layer choice for each client, and applies a bound-based pruning rule
to terminate the $T_1$ scan early whenever no better objective value can be attained.
By enumerating all candidate threshold pairs together with this pruning rule,
\textbf{Algorithm~\ref{alg:sweep_pruning}} solves Problem \textbf{P3} to global optimality.

\begin{theorem}[Optimality of \textbf{Algorithm~\ref{alg:sweep_pruning}}]
\label{thm:global_opt}
\textbf{Algorithm~\ref{alg:sweep_pruning}} obtains the globally optimal solution to Problem \textbf{P3}.
\end{theorem}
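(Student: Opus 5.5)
We will prove the statement in three steps: reduce \textbf{P3} to the threshold problem $\min_{T_1,T_2}J(T_1,T_2)$ in \eqref{eq:threshold_obj}, restrict $(T_1,T_2)$ to the finite grid $\mathcal T_1\times\mathcal T_2$ without loss of optimality, and show that the sweep with pruning in \textbf{Algorithm~\ref{alg:sweep_pruning}} evaluates $J$ exactly at every grid point that could improve the incumbent.

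\emph{Equivalence of \textbf{P3}, $\textbf{P3}^\prime$ and the threshold problem.} For any one-hot assignment $\{y_{i,\ell}\}$, the smallest feasible epigraph values in $\textbf{P3}^\prime$ are the two maxima. Since $\lambda>0$, the optimal $T_1,T_2$ equal these maxima, so $\textbf{P3}$ and $\textbf{P3}^\prime$ have the same optimal value and minimizers.

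Now fix $(T_1,T_2)$. The constraints \eqref{eq:Pcut_epigraph_y_noABC_c}--\eqref{eq:Pcut_epigraph_y_noABC_d} decouple across clients into $\ell\in\mathcal F_i(T_1,T_2)$, and the objective is separable. Hence the inner minimum over $\{y_{i,\ell}\}$ is exactly $J(T_1,T_2)$, with $J=+\infty$ if some $\mathcal F_i$ is empty. Therefore $\min \textbf{P3}^\prime=\min_{T_1,T_2}J(T_1,T_2)$.

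\emph{Reduction to the finite grid.} Let $(T_1,T_2)$ be arbitrary with $J<\infty$. Replace $T_1$ by $\bar T_1=\max_i (S^{[\ell_i]}_{i,1}+S^{[\ell_i]}_{i,2})\le T_1$, where $\ell_i$ are the optimal inner choices, and do the same for $T_2$. The chosen $\ell_i$ remain feasible, so $G$ does not increase, while $H$ does not increase because $\bar T_1\le T_1$ and $\bar T_2\le T_2$. Since $\bar T_1\in\mathcal T_1$ and $\bar T_2\in\mathcal T_2$, some global minimizer lies in $\mathcal T_1\times\mathcal T_2$.

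\emph{Exactness of the sweep.} For fixed $T_2$, as $k$ increases we have $\mathcal F_i(\mathcal T_1[k],T_2)\supseteq\mathcal F_i(\mathcal T_1[k-1],T_2)$. The new elements are exactly the pairs in $\mathcal B[\mathcal T_1[k]]$ that pass the $T_2$ test. We argue by induction on $k$ that after processing bucket $k$ the following invariants hold:
\begin{itemize}
\item $c_i=\min_{\ell\in\mathcal F_i}V_i^{[\ell]}$, or $c_i=\infty$ if $\mathcal F_i$ is empty;
\item $\text{count}$ equals the number of clients with $\mathcal F_i$ nonempty;
\item $G=\sum_i c_i$ over those clients.
\end{itemize}
The induction step only needs that the running minimum is updated correctly by the add/replace bookkeeping. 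Consequently, whenever $\text{count}=N$, the computed $J$ equals $J(\mathcal T_1[k],T_2)$, and when $\text{count}<N$ we have $J=\infty$, so skipping that point is harmless.

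\emph{Validity of pruning (main obstacle).} Suppose the break triggers at index $k$ with current value $G=G(\mathcal T_1[k],T_2)$. For every $k'>k$, $G(\mathcal T_1[k'],T_2)\ge G_{\min}$, because $G_{\min}$ is the unconstrained per-client minimum. Also $\lambda\mathcal T_1[k']\ge\lambda\mathcal T_1[k+1]$ since the list is sorted. Combining these,
\[
J(\mathcal T_1[k'],T_2)\ge G_{\min}+\lambda(\mathcal T_1[k+1]+T_2)>G+\lambda(\mathcal T_1[k]+T_2)=J(\mathcal T_1[k],T_2)\ge J^\ast .
\]
The strict inequality is exactly the break condition $\lambda(\mathcal T_1[k+1]-\mathcal T_1[k])>G-G_{\min}$. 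So no skipped point can improve the incumbent.

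Together, these three steps show that $J^\ast$ equals $\min_{\mathcal T_1\times\mathcal T_2}J$, which by the first two steps equals the optimum of \textbf{P3}, and the stored $\{\ell_i^\ast\}$ attain it.

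The care needed is mainly in the sweep and pruning step. First, $\mathcal T_1$ must be traversed as distinct sorted values so that buckets are processed exactly once. Second, the pruning bound must use $G_{\min}$, which is valid for all larger $T_1$ regardless of feasibility. We will also note the polynomial complexity: $O(N^2L_{\max}^2)$ bucket operations plus sorting.
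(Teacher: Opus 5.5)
Your proof is correct and takes essentially the paper's approach. The pruning step is exactly the paper's argument: bound $G(\cdot,T_2)$ below by $G_{\min}$ and $\lambda T_1$ below by $\lambda\,\mathcal T_1[k+1]$, so that the break condition makes every skipped candidate strictly worse, and enumerate all $T_2\in\mathcal T_2$. Your other steps make rigorous what the paper leaves to the surrounding discussion: the equivalence of \textbf{P3} with $\min_{T_1,T_2}J$, the restriction of the optimal thresholds to the finite grid $\mathcal T_1\times\mathcal T_2$, and the invariants for $c_i$, the counter, and $G$.
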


\begin{proof}
Fix any $T_2 \in \mathcal T_2$. Let $\{\tau_{1,k}\}_{k=1}^{|\mathcal T_1|}$ denote
the elements of $\mathcal T_1$ sorted in ascending order, i.e.,
$\tau_{1,1} \le \tau_{1,2}  \le\cdots \le\tau_{1,|\mathcal T_1|}$.
Recall $J(T_1,T_2)=G(T_1,T_2)+\lambda(T_1+T_2)$, where $G(\cdot,T_2)$ is
non-increasing in $T_1$ and is lower bounded by
$G_{\min}\triangleq\sum_{i\in\mathcal N}\min_{\ell\in\{1,\dots,L_{\max}\}} (E_i^{[\ell]}+\lambda S_{i,3}^{[\ell]})$.

At step $k$, for any future candidate $\tau_{1,m}>\tau_{1,k}$, we have
\[
\begin{aligned}
&J(\tau_{1,m},T_2)-J(\tau_{1,k},T_2)\\
&= \big(G(\tau_{1,m},T_2)-G(\tau_{1,k},T_2)\big)
  +\lambda(\tau_{1,m}-\tau_{1,k}) \\
&\ge (G_{\min}-G(\tau_{1,k},T_2)) + \lambda(\tau_{1,m}-\tau_{1,k}),
\end{aligned}
\]
where the inequality uses $G(\tau_{1,m},T_2)\ge G_{\min}$.
Since $\tau_{1,m}-\tau_{1,k}\ge \tau_{1,k+1}-\tau_{1,k}$, it follows that
\[
J(\tau_{1,m},T_2)-J(\tau_{1,k},T_2)
\ge -\Delta G_{\max} + \lambda(\tau_{1,k+1}-\tau_{1,k}),
\]
with $\Delta G_{\max}\triangleq G(\tau_{1,k},T_2)-G_{\min}$.
Thus, if $\lambda(\tau_{1,k+1}-\tau_{1,k})>\Delta G_{\max}$, then
$J(\tau_{1,m},T_2)>J(\tau_{1,k},T_2)$ for all $m>k$,
so no better solution exists beyond $\tau_{1,k}$ and the scan over $T_1$
can be safely terminated for this fixed $T_2$.
Furthermore,
enumerating all $T_2\in\mathcal T_2$ ensures that the global minimizer
$(T_1^\ast,T_2^\ast)$ is not missed, hence the algorithm returns a
globally optimal $\{y_{i,\ell}^\ast\}$.
\end{proof}

\textbf{Complexity.}
\textbf{Algorithm~\ref{alg:sweep_pruning}} first constructs the candidate sets
$\mathcal T_1=\{S^{[\ell]}_{i,1}+S^{[\ell]}_{i,2}\}$ and
$\mathcal T_2=\{S^{[\ell]}_{i,4}+S^{[\ell]}_{i,5}\}$ and sorts them, which costs
$O(NL\log(NL))$ since $|\mathcal T_1|,|\mathcal T_2|\le N(L-1)$.
For each fixed $T_2\in\mathcal T_2$, Algorithm~1 scans $T_1\in\mathcal T_1$ in ascending
order. Because $T_1$ is enumerated over the sorted distinct values in $\mathcal T_1$,
the uplink-feasible set $\{(i,\ell)\mid S^{[\ell]}_{i,1}+S^{[\ell]}_{i,2}\le T_1\}$
expands only when $T_1$ advances to the next candidate; hence, at a given $T_1$ it suffices
to process only the newly activated pairs $(i,\ell)$ satisfying
$S^{[\ell]}_{i,1}+S^{[\ell]}_{i,2}=T_1$.
For each such pair, the algorithm performs (i) one comparison to test downlink feasibility
$S^{[\ell]}_{i,4}+S^{[\ell]}_{i,5}\le T_2$, (ii) one evaluation of
$E_i^{[\ell]}+\lambda S_{i,3}^{[\ell]}$, and (iii) a constant-time comparison and possible update
of the stored best value $c_i$ together with the running sum. All these operations take
$O(1)$ time per processed pair.
Thus, over a complete scan of $T_1$ for a fixed $T_2$, each $(i,\ell)$ is processed at
most once, and the total update cost per $T_2$ is $O(NL)$.
Consequently, the overall running time of \textbf{Algorithm~\ref{alg:sweep_pruning}} is
\begin{equation}
\begin{aligned}
&O\big(|\mathcal T_2|\cdot NL\big)+O\big(NL\log(NL)\big)
 \\ =&O\big(N^2L^2+NL\log(NL)\big)\\=& O\big(N^2L^2\big).
\end{aligned}
\end{equation}

\section{Joint Optimization of Model Splitting and Resource Allocation}\label{solu_appro}
We now present a polynomial-time approximation method for the general joint optimization of model splitting and resource allocation. Specifically, we first reformulate Problem $\textbf{P2}$ exactly as a two-dimensional master problem with respect to the synchronization variables $(T_1, T_2)$. Since the resulting master problem is generally nonconvex, we then approximate it via a uniform grid search over the $(T_1, T_2)$ domain. This leads to a polynomial-time approximation algorithm with an explicit performance guarantee.
\subsection{{Two-Dimensional Bottleneck Projection}}
Recall that the latency term in Problem $\textbf{P2}$ is characterized by the two epigraph variables $T_1$ and $T_2$, where
\begin{align}
    s_{i,1}+s_{i,2}\le T_1,\qquad \forall i\in\mathcal N,\\
s_{i,4}+s_{i,5}\le T_2,\qquad \forall i\in\mathcal N.
\end{align}
Once $(T_1,T_2)$ is fixed, the optimization variables across clients become decoupled. This key observation enables us to project the original mixed-integer problem onto the two synchronization variables.
Specifically,
for each client $i$, stage $j$, 
and cut layer $\ell\in\{1,2,\ldots,L_{\max}\}$, let $\widehat{\Psi}_{i,j}^{(\ell)}(s_{i,j})$ denote the minimum stage energy when the service time is $s_{i,j}$ and the cut layer is fixed at $\ell$, i.e.,
\begin{equation}
\widehat{\Psi}_{i,j}^{(\ell)}(s_{i,j})=\min_{u_{i,j}(t)}E_{i,j}\big(u_{i,j},s_{i,j},L_{c,i}{=}\ell\big).
\end{equation}
Moreover, the feasible service-time is given by
\begin{equation}
S_{i,j}^{(\ell),\min}\le s_{i,j}\le S_{i,j}^{(\ell),\max}.
\label{eq:sec6_stage_bound}
\end{equation}

For fixed $(T_1,T_2)$, the per-client optimization decomposes into three independent subproblems.
First, the server-side FP/BP stage contributes additively to the objective and is not directly constrained by $T_1$ or $T_2$. Accordingly, we define
\begin{subequations}
\begin{align}
\textbf{P4}^\prime:\;U_{i,3}^{(\ell)}&
\triangleq
\min_{s_{i,3}}\;
\widehat{\Psi}_{i,3}^{(\ell)}(s_{i,3})+\lambda s_{i,3}
\\
&\text{s.t.}\quad
S_{i,3}^{(\ell),\min}\le s_{i,3}\le S_{i,3}^{(\ell),\max}.
\label{eq:Ui3_sec6}
\end{align}
\end{subequations}

Second, the first synchronization variable $T_1$ constrains the sum of client-side FP time and uplink transmission time, which is given by
\begin{subequations}\label{eq:Ui12_sec6}
\begin{align}
\textbf{P4}^{\prime\prime}:\;U_{i,12}^{(\ell)}(T_1)
\triangleq\;
&\min_{s_{i,1},s_{i,2}}
\ \widehat{\Psi}_{i,1}^{(\ell)}(s_{i,1})+\widehat{\Psi}_{i,2}^{(\ell)}(s_{i,2})
\label{eq:Ui12_sec6_a}\\
\text{s.t.}\quad
&s_{i,1}+s_{i,2}\le T_1,
\label{eq:Ui12_sec6_b}\\
&S_{i,1}^{(\ell),\min}\le s_{i,1}\le S_{i,1}^{(\ell),\max},
\label{eq:Ui12_sec6_c}\\
&S_{i,2}^{(\ell),\min}\le s_{i,2}\le S_{i,2}^{(\ell),\max}.
\label{eq:Ui12_sec6_d}
\end{align}
\end{subequations}
If
\(\
T_1<S_{i,1}^{(\ell),\min}+S_{i,2}^{(\ell),\min},
\)
then cut layer $\ell$ is infeasible for client $i$ under $T_1$, and we set
$U_{i,12}^{(\ell)}(T_1)=+\infty$.

Third,  considering $T_2$ constrains the sum of the downlink transmission time and client-side BP time, we define
\begin{subequations}\label{eq:Ui45_sec6}
\begin{align}
\textbf{P4}^{\prime\prime\prime}:\;U_{i,45}^{(\ell)}(T_2)
\triangleq\;
&\min_{s_{i,4},s_{i,5}}
\ \widehat{\Psi}_{i,4}^{(\ell)}(s_{i,4})+\widehat{\Psi}_{i,5}^{(\ell)}(s_{i,5})
\label{eq:Ui45_sec6_a}\\
\text{s.t.}\quad
&s_{i,4}+s_{i,5}\le T_2,
\label{eq:Ui45_sec6_b}\\
&S_{i,4}^{(\ell),\min}\le s_{i,4}\le S_{i,4}^{(\ell),\max},
\label{eq:Ui45_sec6_c}\\
&S_{i,5}^{(\ell),\min}\le s_{i,5}\le S_{i,5}^{(\ell),\max}.
\label{eq:Ui45_sec6_d}
\end{align}
\end{subequations}
If
\(
T_2<S_{i,4}^{(\ell),\min}+S_{i,5}^{(\ell),\min},
\)
then cut layer $\ell$ is infeasible for client $i$ under $T_2$, and we set
$U_{i,45}^{(\ell)}(T_2)=+\infty$.

For fixed $(T_1,T_2)$, client $i$ evaluates all candidate cut layers and chooses the least-cost feasible one. This yields the client value function
\begin{equation}
V_i(T_1,T_2)
\triangleq
\min_{\ell\in\{1,2,\ldots,L_{\max}\}}
\Big(
U_{i,12}^{(\ell)}(T_1)
+
U_{i,45}^{(\ell)}(T_2)
+
U_{i,3}^{(\ell)}
\Big).
\label{eq:Vi_sec6}
\end{equation}

Hence, the original mixed-integer Problem $\textbf{P2}$ is exactly reduced to the following two-dimensional master problem:
\begin{align}
\textbf{P5}:\quad
\min_{T_1,T_2}\;&
J(T_1,T_2)
\triangleq
\lambda(T_1+T_2)+\sum_{i\in\mathcal N}V_i(T_1,T_2). \label{eq:Pmaster_sec6}
\end{align}
where $V_i(T_1,T_2)$ is defined in \eqref{eq:Vi_sec6}. 
The feasibility conditions are already incorporated into the definitions of 
$U_{i,3}^{(\ell)}$, $U_{i,12}^{(\ell)}(T_1)$, and $U_{i,45}^{(\ell)}(T_2)$, 
with infeasible choices assigned value $+\infty$.

\begin{theorem}[Problem Equivalence]
\label{thm:projection_exact_sec6}
{
Problem $\textbf{P2}$ is equivalent to the master problem in \eqref{eq:Pmaster_sec6}. In particular, solving Problem \textbf{P5} yields an optimal solution to Problem $\textbf{P2}$.
}
\end{theorem}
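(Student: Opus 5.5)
We prove the equivalence by showing that the two optimal values coincide, namely $\mathrm{OPT}(\textbf{P2})=\min_{T_1,T_2}J(T_1,T_2)$. We establish the two inequalities separately and then show how an optimal solution of \textbf{P5} lifts to an optimal solution of \textbf{P2}. The guiding idea is a partial-minimization argument: for fixed $(T_1,T_2)$, \textbf{P2} separates across clients, and within each client it separates across the three stage groups $\{1,2\}$, $\{3\}$ and $\{4,5\}$.

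\emph{Step 1 ($\mathrm{OPT}(\textbf{P2})\ge\min J$).} Take any feasible point $\xi=(\{s_{i,j}\},\{y_{i,\ell}\},T_1,T_2)$ of \textbf{P2}. By \eqref{eq:P_original_e}, each client $i$ has a unique $\ell_i$ with $y_{i,\ell_i}=1$. Constraint \eqref{eq:P_original_d} then reduces to $S^{(\ell_i),\min}_{i,j}\le s_{i,j}\le S^{(\ell_i),\max}_{i,j}$.

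\begin{itemize}
\item The pair $(s_{i,1},s_{i,2})$ satisfies \eqref{eq:P_original_b}, so it is feasible for $\textbf{P4}^{\prime\prime}$ with $\ell=\ell_i$. Hence $\widehat\Psi^{(\ell_i)}_{i,1}(s_{i,1})+\widehat\Psi^{(\ell_i)}_{i,2}(s_{i,2})\ge U^{(\ell_i)}_{i,12}(T_1)$, and in particular this value is finite.
\item By the same argument, $\widehat\Psi^{(\ell_i)}_{i,4}(s_{i,4})+\widehat\Psi^{(\ell_i)}_{i,5}(s_{i,5})\ge U^{(\ell_i)}_{i,45}(T_2)$.
\item Also $\widehat\Psi^{(\ell_i)}_{i,3}(s_{i,3})+\lambda s_{i,3}\ge U^{(\ell_i)}_{i,3}$.
\end{itemize}

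Summing these three bounds over $i$ and minimizing over $\ell$ gives
$$\text{obj}(\xi)\ge\lambda(T_1+T_2)+\sum_i V_i(T_1,T_2)=J(T_1,T_2)\ge\min J.$$
The regrouping is legitimate because the term $\lambda\sum_i s_{i,3}$ in \eqref{eq:P_original_a} is assigned entirely to the stage-3 subproblem, and the $y$-weighted sum collapses to the single selected $\ell_i$.

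\emph{Step 2 ($\mathrm{OPT}(\textbf{P2})\le\min J$, and recovery).} Let $(T_1^\ast,T_2^\ast)$ minimize $J$ with $J<\infty$. For each $i$, take $\ell_i^\ast$ attaining the minimum in \eqref{eq:Vi_sec6}, and take minimizers of $\textbf{P4}^\prime$, $\textbf{P4}^{\prime\prime}$ and $\textbf{P4}^{\prime\prime\prime}$ for that layer. Set $y_{i,\ell_i^\ast}=1$ and all other $y_{i,\ell}=0$, and assemble the resulting $s_{i,j}$. This point satisfies \eqref{eq:P_original_b}--\eqref{eq:P_original_e} by construction, and its \textbf{P2} objective equals $J(T_1^\ast,T_2^\ast)$. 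Combined with Step 1, it is optimal for \textbf{P2}.

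The loop closes with the remark that $\widehat\Psi_{i,j}^{(\ell)}$ already encodes the inner optimal control over $u_{i,j}(t)$ through \eqref{eq:stage-opt1}. Therefore \textbf{P2} itself equals \textbf{P1} after the epigraph substitution argued around \eqref{eq:epi_T1}--\eqref{eq:epi_T2}, and no further argument is needed on that side.

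\emph{Main obstacle.} The delicate point is the existence of minimizers, both in the inner subproblems and in the outer problem.

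\begin{itemize}
\item \emph{Inner subproblems.} Each is a minimization over a compact box intersected with a half-space. I would argue that each $\widehat\Psi^{(\ell)}_{i,j}$ is continuous on $[S^{\min},S^{\max}]$. For $j\in\{1,3,5\}$ it is of the form $c/s^2$, and for $j\in\{2,4\}$ it is of the form $s(2^{d/(Bs)}-1)$; both are continuous for $s>0$. Weierstrass then applies whenever the feasible set is nonempty, which is exactly the condition $T_1\ge S^{\min}_{i,1}+S^{\min}_{i,2}$ (and its analogue for $T_2$) encoded by the $+\infty$ convention.
\item \emph{Outer problem.} Here $J$ is only lower semicontinuous in $(T_1,T_2)$, as a finite minimum of finite sums of nonincreasing continuous or $+\infty$ pieces. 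I would first restrict to the compact box $T_1\le\max_{i,\ell}(S^{\max}_{i,1}+S^{\max}_{i,2})$, and similarly for $T_2$. Beyond that box the $V_i$ no longer change while $\lambda(T_1+T_2)$ strictly increases, so the restriction loses nothing. Lower semicontinuity on this compact box then guarantees that a minimizer of \textbf{P5} exists.

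With existence secured, the argument of Step 2 goes through and yields the stated equivalence.
\end{itemize}
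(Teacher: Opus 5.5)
Your proposal is correct and follows essentially the same route as the paper: fix $(T_1,T_2)$, decouple across clients and across the stage blocks $\{1,2\}$, $\{3\}$, $\{4,5\}$, minimize over $\ell$, and add $\lambda(T_1+T_2)$. The paper's proof only asserts this decomposition in one paragraph, and your two-sided inequality, explicit lifting of a \textbf{P5} minimizer, and existence argument (Weierstrass for the inner problems, lower semicontinuity of $J$ plus the reduction to a compact box) make it rigorous. The closing aside identifying \textbf{P2} with \textbf{P1} is not needed for this statement, which concerns only \textbf{P2} and \textbf{P5}, and your argument does not establish it, so drop it.
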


\begin{proof}
Fix any feasible pair $(T_1,T_2)$. Under this pair, the only coupling among clients is removed, and the optimization variables of different clients become separable. For each client $i$ and each cut layer $\ell$, the optimal contribution to the objective decomposes into the three blocks $U_{i,12}^{(\ell)}(T_1)$,
$U_{i,45}^{(\ell)}(T_2)$ and 
$U_{i,3}^{(\ell)}$. Minimizing over $\ell\in\{1,2,\ldots,L_{\max}\}$ gives the client value function $V_i(T_1,T_2)$ in \eqref{eq:Vi_sec6}. Summing the optimal client values over all clients and adding the synchronization penalty $\lambda(T_1+T_2)$ yields \eqref{eq:Pmaster_sec6}. Thus, minimizing Problem \textbf{P5} over $(T_1,T_2)$ is equivalent to solving Problem $\textbf{P2}$.
\end{proof}

\subsection{Grid Approximation of the Master Problem}
The reformulation in \eqref{eq:Pmaster_sec6} is equivalent to the original problem. However, 
 the master objective $J(T_1,T_2)$ is generally nonconvex because each client value function $V_i(T_1,T_2)$ contains a minimization over discrete cut layers. Since the master problem is only two-dimensional, we approximate it via a uniform grid search over the bounded rectangle
\begin{equation}
T_1\in[\underline{T}_1,\overline{T}_1],
\qquad
T_2\in[\underline{T}_2,\overline{T}_2],
\label{eq:grid_region_sec6}
\end{equation}
where the search box is chosen to contain at least one global optimizer of \eqref{eq:Pmaster_sec6}.
For a grid step size $\Delta>0$, we evaluate $J(T_1,T_2)$ at all grid points in \eqref{eq:grid_region_sec6} and return the best value, denoted by $J_{\mathrm{grid}}$.
The following theorem provides an additive approximation guarantee.

\begin{theorem}[Relative Error Bound]
\label{theorem:relative_error_sec6}
Assume that the search region 
$[\underline{T}_1,\overline{T}_1]\times[\underline{T}_2,\overline{T}_2]$
contains at least one global optimizer $(T_1^\ast,T_2^\ast)$ of
\eqref{eq:Pmaster_sec6}, and let
\(
J^\ast \triangleq J(T_1^\ast,T_2^\ast).
\)
Suppose that
\(
J^\ast \ge \lambda(\underline T_1+\underline T_2),
\)
where $\underline T_1+\underline T_2>0$. If a uniform grid with step size $\Delta$ is used in both dimensions, then
\[
\frac{J_{\mathrm{grid}}-J^\ast}{J^\ast}
\le
\frac{2\Delta}{\underline T_1+\underline T_2}.
\]
Consequently, in order to guarantee
\(
J_{\mathrm{grid}}\le (1+\epsilon)J^\ast,
\)
it suffices to choose
\(
\Delta \le \frac{\epsilon}{2}(\underline T_1+\underline T_2).
\)
\end{theorem}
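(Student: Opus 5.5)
The plan is to compare the best grid value against a single well-chosen grid point, namely the one obtained by rounding the optimizer $(T_1^\ast,T_2^\ast)$ \emph{upward} in both coordinates. The argument then uses monotonicity of the client value functions and the lower bound on $J^\ast$.

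\textbf{Step 1: monotonicity of the client value functions.} First I would show that each $V_i(T_1,T_2)$ in \eqref{eq:Vi_sec6} is non-increasing in each argument. For fixed $\ell$, the feasible set of $\textbf{P4}^{\prime\prime}$ is defined by $s_{i,1}+s_{i,2}\le T_1$ together with the box constraints \eqref{eq:Ui12_sec6_c}--\eqref{eq:Ui12_sec6_d}. This set only grows as $T_1$ increases, so $U_{i,12}^{(\ell)}(T_1)$ is non-increasing. This remains true with the $+\infty$ convention: once $T_1\ge S_{i,1}^{(\ell),\min}+S_{i,2}^{(\ell),\min}$, the cut layer stays feasible for every larger $T_1$. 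The same reasoning applies to $U_{i,45}^{(\ell)}(T_2)$. The term $U_{i,3}^{(\ell)}$ does not depend on $(T_1,T_2)$. A pointwise minimum over $\ell$ and a sum over $i$ both preserve monotonicity, so $\sum_i V_i$ is non-increasing in $T_1$ and in $T_2$. This is the same structural fact the paper labeled ``non-increasing'' for $G$ in \eqref{eq:threshold_obj}.

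\textbf{Step 2: rounding the optimizer to the grid.} Next, define $\tilde T_1$ as the smallest grid point that is at least $T_1^\ast$, and define $\tilde T_2$ analogously. Because the grid is uniform with step $\Delta$, we have $0\le \tilde T_k-T_k^\ast\le\Delta$ for $k=1,2$. The main technical care is at the upper boundary of the search box. I would take the grid to include (or extend one step past) $\overline T_k$, so that an upward-rounded point always exists. Rounding to any point $\ge T_k^\ast$ is harmless for feasibility, because by Step 1 larger thresholds never make a client infeasible. This step is the one I expect to need the most care, since the statement leaves the exact grid construction implicit.

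\textbf{Step 3: combining the bounds.} Using Step 1 and then Step 2,
\[
J_{\mathrm{grid}}\le J(\tilde T_1,\tilde T_2)=\lambda(\tilde T_1+\tilde T_2)+\sum_i V_i(\tilde T_1,\tilde T_2)\le \lambda(T_1^\ast+T_2^\ast+2\Delta)+\sum_i V_i(T_1^\ast,T_2^\ast)=J^\ast+2\lambda\Delta .
\]
Dividing by $J^\ast$ and applying the hypothesis $J^\ast\ge\lambda(\underline T_1+\underline T_2)>0$ gives
\[
\frac{J_{\mathrm{grid}}-J^\ast}{J^\ast}\le\frac{2\lambda\Delta}{\lambda(\underline T_1+\underline T_2)}=\frac{2\Delta}{\underline T_1+\underline T_2}.
\]
The sufficient condition follows directly: choosing $\Delta\le\frac{\epsilon}{2}(\underline T_1+\underline T_2)$ makes the right-hand side at most $\epsilon$, and hence $J_{\mathrm{grid}}\le(1+\epsilon)J^\ast$.
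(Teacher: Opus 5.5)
Your proposal is correct and follows essentially the same argument as the paper: round $(T_1^\ast,T_2^\ast)$ up to a grid point within $\Delta$ in each coordinate, use that each $V_i$ is non-increasing in $(T_1,T_2)$ to get $J_{\mathrm{grid}}\le J^\ast+2\lambda\Delta$, and then divide by $J^\ast\ge\lambda(\underline T_1+\underline T_2)$. Beyond the paper, you justify the monotonicity explicitly (including the $+\infty$ convention) and note that the grid must contain a point $\ge T_k^\ast$ near $\overline T_k$; the paper simply asserts both.
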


\begin{proof}
Since the grid spacing is $\Delta$, there exists a grid point $(\widetilde T_1,\widetilde T_2)$ such that
\begin{align}
T_1^\ast\le \widetilde T_1\le T_1^\ast+\Delta, \notag
\\
T_2^\ast\le \widetilde T_2\le T_2^\ast+\Delta.  \notag
\end{align}
For each client $i\in\mathcal N$, increasing $T_1$ or $T_2$ relaxes the corresponding synchronization constraints and thus cannot increase the value function $V_i(T_1,T_2)$. Hence,
\[
V_i(\widetilde T_1,\widetilde T_2)\le V_i(T_1^\ast,T_2^\ast),\qquad \forall i\in\mathcal N.
\]
It follows that
\[
\begin{aligned}
J(\widetilde T_1,\widetilde T_2)
&=
\lambda(\widetilde T_1+\widetilde T_2)+\sum_{i\in\mathcal N}V_i(\widetilde T_1,\widetilde T_2)\\
&\le
\lambda(T_1^\ast+\Delta+T_2^\ast+\Delta)+\sum_{i\in\mathcal N}V_i(T_1^\ast,T_2^\ast)\\
&=
J^\ast+2\lambda\Delta.
\end{aligned}
\]
Since the grid-search algorithm returns the best grid point, we further have
\[
J_{\mathrm{grid}}\le J(\widetilde T_1,\widetilde T_2).
\]

Then, we have
\(
J_{\mathrm{grid}}-J^\ast \le 2\lambda\Delta.
\)
Dividing both sides by $J^\ast$ and using
\(
J^\ast \ge \lambda(\underline T_1+\underline T_2),
\)
we obtain
\[
\frac{J_{\mathrm{grid}}-J^\ast}{J^\ast}
\le
\frac{2\lambda\Delta}{J^\ast}
\le
\frac{2\Delta}{\underline T_1+\underline T_2}.
\]
The desired \((1+\epsilon)\)-approximation guarantee is obtained by requiring
\(
2\Delta/(\underline T_1+\underline T_2)\le \epsilon,
\)
which completes the proof.
\end{proof}


\begin{algorithm}[t]
\caption{Grid Approximation for the Two-Dimensional Master Problem}
\label{alg:grid_projection}
\KwIn{$\mathcal N,\mathcal L,[\underline T_1,\overline T_1],[\underline T_2,\overline T_2],\Delta,\lambda$.}
\KwOut{$J^\ast,(T_1^\ast,T_2^\ast),\{\ell_i^\ast\}$.}

$\mathcal G_1\gets \{\underline T_1,\underline T_1+\Delta,\underline T_1+2\Delta,\dots,\overline T_1\}$\;
$\mathcal G_2\gets \{\underline T_2,\underline T_2+\Delta,\underline T_2+2\Delta,\dots,\overline T_2\}$\;
$J^\ast\gets \infty$\;

\ForEach{$T_1\in\mathcal G_1$}{
    \ForEach{$T_2\in\mathcal G_2$}{
        $J\gets \lambda(T_1+T_2)$\;
        feasible $\gets 1$\;
        Compute $U_{i,3}^{(\ell)}$, $U_{i,12}^{(\ell)}(T_1)$, and $U_{i,45}^{(\ell)}(T_2)$ for all required $(i,\ell,T_1,T_2)$ combinations\;
        \ForEach{$i\in\mathcal N$}{
            $V_i\gets \infty,\ \ell_i\gets -1$\;
            \ForEach{$\ell\in\mathcal L$}{
                \If{$T_1<S_{i,1}^{(\ell),\min}+S_{i,2}^{(\ell),\min}$ \textbf{or} $T_2<S_{i,4}^{(\ell),\min}+S_{i,5}^{(\ell),\min}$}{
                    \textbf{continue}\;
                }
                $C_i^{(\ell)}\gets U_{i,3}^{(\ell)}+U_{i,12}^{(\ell)}(T_1)+U_{i,45}^{(\ell)}(T_2)$\;
                \If{$C_i^{(\ell)}<V_i$}{
                    $V_i\gets C_i^{(\ell)},\ \ell_i\gets \ell$\;
                }
            }
            \If{$V_i=\infty$}{
                feasible $\gets 0$\;
                \textbf{break}\;
            }
            $J\gets J+V_i$\;
        }
        \If{feasible $=1$ \textbf{and} $J<J^\ast$}{
            $J^\ast\gets J,\ (T_1^\ast,T_2^\ast)\gets (T_1,T_2),\ \{\ell_i^\ast\}\gets \{\ell_i\}$\;
        }
    }
}
 \Return{$J^\ast,(T_1^\ast,T_2^\ast),\{\ell_i^\ast\}$}\;
\end{algorithm}



\begin{theorem}[Complexity of Grid Approximation]
\label{thm:grid_complexity_sec6}
Let $\eta>0$ denote the numerical accuracy  to solve the inner subproblems. Then, the proposed grid-search algorithm has total complexity
\(
O\!\big(G_1G_2\,NL\log(1/\eta)\big).
\)
Under bounded search ranges, choosing $\Delta$ according to \textbf{Theorem~\ref{theorem:relative_error_sec6}} yields the total complexity
\(
O\!\big((NL/\epsilon^2)\log(1/\eta)\big).
\)
Hence, \textbf{Algorithm \ref{alg:grid_projection}} runs in polynomial time.
\end{theorem}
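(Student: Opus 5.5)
The plan is to count operations in Algorithm~\ref{alg:grid_projection} loop by loop, bound the cost of each inner evaluation, and then substitute the grid sizes implied by Theorem~\ref{theorem:relative_error_sec6}. Here $G_1\triangleq|\mathcal G_1|=\lfloor(\overline T_1-\underline T_1)/\Delta\rfloor+1$ and $G_2\triangleq|\mathcal G_2|$ are the numbers of grid points in each dimension. The outer double loop runs over $G_1G_2$ pairs $(T_1,T_2)$. For each pair, the loops over $i\in\mathcal N$ and $\ell\in\mathcal L$ visit at most $NL_{\max}\le NL$ client--layer combinations. Each visit does a constant-time feasibility check, forms $C_i^{(\ell)}$, and compares it with $V_i$. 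It therefore suffices to show that each of $U_{i,3}^{(\ell)}$, $U_{i,12}^{(\ell)}(T_1)$ and $U_{i,45}^{(\ell)}(T_2)$ can be computed to accuracy $\eta$ in $O(\log(1/\eta))$ time.

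\emph{Cost of $U_{i,3}^{(\ell)}$.} Problem $\textbf{P4}^\prime$ does not depend on $(T_1,T_2)$. It can be computed once before the grid loop, for $NL$ values in total, which is dominated by the main term.

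\emph{Convexity of the stage energies.} From \eqref{eq:theta_energy1}, the computing stages give $\widehat\Psi^{(\ell)}_{i,j}(s)=c/s^2$, which is convex and decreasing. The transmission stages give $\widehat\Psi^{(\ell)}_{i,j}(s)=a s(2^{b/s}-1)$, the perspective of a convex function, hence also convex and decreasing.

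\emph{Reduction to one-dimensional searches.} With these convexity facts, $\textbf{P4}^\prime$ is a one-dimensional convex problem on an interval. For $\textbf{P4}^{\prime\prime}$, both terms are decreasing, so the constraint $s_{i,1}+s_{i,2}\le T_1$ is active whenever $T_1\le S^{(\ell),\max}_{i,1}+S^{(\ell),\max}_{i,2}$; otherwise both variables sit at their upper bounds. Substituting $s_{i,2}=T_1-s_{i,1}$ gives a one-dimensional convex function of $s_{i,1}$ on the interval $[\max(S^{\min}_{i,1},T_1-S^{\max}_{i,2}),\min(S^{\max}_{i,1},T_1-S^{\min}_{i,2})]$. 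Equivalently, one can bisect on the KKT multiplier, equating the two derivatives. Bisection or golden-section search on this bounded interval reaches accuracy $\eta$ in $O(\log(1/\eta))$ iterations, each costing $O(1)$ closed-form evaluations. The same argument applies to $\textbf{P4}^{\prime\prime\prime}$.

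\emph{Total cost.} Multiplying the pieces gives $O(G_1G_2\,NL\log(1/\eta))$.

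\emph{Substituting the step size.} Take $\Delta=\frac{\epsilon}{2}(\underline T_1+\underline T_2)$. Under bounded search ranges the widths $\overline T_k-\underline T_k$ are constants independent of $\epsilon$, so $G_k=O\big((\overline T_k-\underline T_k)/(\epsilon(\underline T_1+\underline T_2))\big)+1=O(1/\epsilon)$. Hence $G_1G_2=O(1/\epsilon^2)$, and the total becomes $O((NL/\epsilon^2)\log(1/\eta))$. This is polynomial in $N$, $L$, $1/\epsilon$ and $\log(1/\eta)$.

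\emph{Main obstacle.} I expect the hardest part to be justifying the $O(\log(1/\eta))$ bound per subproblem. This requires convexity of the transmission-energy term in $s$ and monotonicity, so that the coupling constraint is active or trivially slack. It also requires the search interval for the bisection to have length bounded by a constant, which holds because the $S^{(\ell),\min}$ and $S^{(\ell),\max}$ are fixed problem data. I would verify convexity via the perspective argument first, and only fall back to an explicit second-derivative computation if needed.
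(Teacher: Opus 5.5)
Your proposal is correct and follows essentially the same route as the paper. Convexity of the stage-energy functions gives an $O(\log(1/\eta))$ cost per client--cut-layer pair, the grid has $G_1G_2$ points with at most $NL$ pairs each, and $\Delta=\Theta(\epsilon)$ under bounded ranges gives $G_1G_2=O(1/\epsilon^2)$. The differences are minor: you get convexity of the transmission term from a perspective argument rather than the paper's explicit second derivative, and your reduction of $\textbf{P4}^{\prime\prime}$ and $\textbf{P4}^{\prime\prime\prime}$ (active coupling constraint by monotonicity, then a bounded one-dimensional search) justifies the per-subproblem $O(\log(1/\eta))$ bound more concretely than the paper's appeal to ``standard convex optimization methods.''
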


\begin{proof}
\textit{1) Complexity of the inner subproblems.}
Fix any grid point $(T_1,T_2)$, client $i$, and candidate cut layer $\ell$. 
For the corresponding client $i$ pair $(i,\ell)$ with cut layer $\ell$, the algorithm evaluates the three inner quantities
\(
U_{i,3}^{(\ell)},\)
\(U_{i,12}^{(\ell)}(T_1),\)
\(U_{i,45}^{(\ell)}(T_2).
\)
We first show that these inner subproblems are convex. For the computation stages $j\in\{1,3,5\}$, under a fixed cut layer $\ell$, the stage-energy functions are given by
\[
\widehat{\Psi}_{i,j}^{(\ell)}(s_{i,j})
=
{a_{i,j}^{(\ell)}}/{s_{i,j}^{2}},
\qquad j\in\{1,3,5\},
\]
where
\[
a_{i,j}^{(\ell)}
=
\begin{cases}
{G_i\left(\sum\limits_{k=1}^{\ell}\rho_k\right)^3}/{n_i^3}, & j=1,\\[1.2ex]
{G_{\mathrm S}\left(\sum\limits_{k=\ell+1}^{L}(\rho_k+\varpi_k)\right)^3}/{n_{\mathrm S}^3}, & j=3,\\[1.2ex]
{G_i\left(\sum\limits_{k=1}^{\ell}\varpi_k\right)^3}/{n_i^3}, & j=5.
\end{cases}
\]
Since
\[
\frac{d^2}{d s_{i,j}^2}\!\left(\frac{a_{i,j}^{(\ell)}}{s_{i,j}^2}\right)
=
\frac{6a_{i,j}^{(\ell)}}{s_{i,j}^4}>0,
\qquad \forall s_{i,j}>0,
\]
it follows that $\widehat{\Psi}_{i,j}^{(\ell)}(s_{i,j})$ is convex on $s_{i,j}>0$.

For the communication stages $j\in\{2,4\}$, under a fixed cut layer $\ell$, the stage-energy functions can be written as
\[
\widehat{\Psi}_{i,j}^{(\ell)}(s_{i,j})
=
b_{i,j}^{(\ell)}\,s_{i,j}\!\left(2^{c_{i,j}^{(\ell)}/s_{i,j}}-1\right),
\qquad j\in\{2,4\},
\]
where
\[
b_{i,j}^{(\ell)}
=
\begin{cases}
\dfrac{B^{\mathrm U}N_0}{h_i^2}, & j=2,\\[1.2ex]
\dfrac{B^{\mathrm D}N_0}{h_{\mathrm S}^2}, & j=4,
\end{cases}
\qquad
c_{i,j}^{(\ell)}
=
\begin{cases}
\dfrac{d_i^{\mathrm U}(\ell)}{B^{\mathrm U}}, & j=2,\\[1.2ex]
\dfrac{d_i^{\mathrm D}(\ell)}{B^{\mathrm D}}, & j=4.
\end{cases}
\]
Define
\[
\phi_{i,j}^{(\ell)}(s_{i,j})
\triangleq
b_{i,j}^{(\ell)}\,s_{i,j}\!\left(2^{c_{i,j}^{(\ell)}/s_{i,j}}-1\right),
\qquad s_{i,j}>0.
\]
Then
\[
\frac{d^2}{d s_{i,j}^2}\phi_{i,j}^{(\ell)}(s_{i,j})
=
\frac{b_{i,j}^{(\ell)}\,2^{c_{i,j}^{(\ell)}/s_{i,j}}
\big(c_{i,j}^{(\ell)}\big)^2(\ln 2)^2}{s_{i,j}^3}
>0,
\ \forall s_{i,j}>0.
\]
Hence, $\widehat{\Psi}_{i,j}^{(\ell)}(s_{i,j})$ is convex for $j\in\{2,4\}$.

Thus, $U_{i,3}^{(\ell)}$ is a one-dimensional convex minimization over the  interval
\(
[S_{i,3}^{(\ell),\min},\,S_{i,3}^{(\ell),\max}].
\)
Moreover, $U_{i,12}^{(\ell)}(T_1)$ and $U_{i,45}^{(\ell)}(T_2)$ are fixed-dimensional convex programs with box constraints and one affine coupling constraint. Hence, these inner subproblems can be solved efficiently. In particular, $U_{i,3}^{(\ell)}$ admits a closed-form solution after projecting the unconstrained minimizer onto its feasible interval, while $U_{i,12}^{(\ell)}(T_1)$ and $U_{i,45}^{(\ell)}(T_2)$ can be solved to accuracy $\eta$ in time polynomial in $\log(1/\eta)$ by standard convex optimization methods. Thus, the total cost per client--cut-layer pair is $O(\log(1/\eta))$.

\textit{2) Complexity of the outer grid search.}
Let
\[
G_1\triangleq \left\lceil\frac{\overline T_1-\underline T_1}{\Delta}\right\rceil+1,
\qquad
G_2\triangleq \left\lceil\frac{\overline T_2-\underline T_2}{\Delta}\right\rceil+1
\]
denote the numbers of grid points along the two dimensions. Then, the grid contains $G_1G_2$ points in total. At each grid point, the algorithm evaluates all $N$ clients and at most $L-1$ candidate cut layers. Combining the complexity of the inner subproblems, the total complexity is
\[
O\!\big(G_1G_2\,N(L-1)\log(1/\eta)\big)
=
O\!\big(G_1G_2\,NL\log(1/\eta)\big).
\]


Finally, under bounded search ranges, \textbf{Theorem~\ref{theorem:relative_error_sec6}} implies that $\Delta=O(\epsilon)$. Hence,
\(
G_1=O(1/\epsilon),\)
\(G_2=O(1/\epsilon),
\)
and thus
\(
G_1G_2=O(1/\epsilon^2).
\)
Substituting this into the above bound yields
\(
O\!\big((NL/\epsilon^2)\log(1/\eta)\big)
\), which completes the proof.
\end{proof}

\subsection{Recovering the Optimal Control Inputs}
Once the optimal solution of the high-level problem $\textbf{P}$ is obtained, the optimal solution to the original hybrid control problem can be recovered by solving the lower-level control subproblems stage by stage. In particular, the high-level optimizer determines the optimal cut-layer decisions $\{L_{c,i}^{*}\}$ and the optimal service-time variables $\{s_{i,j}^{*}\}$. With these decisions solved, the corresponding optimal control input is recovered as
\begin{equation}
u_{i,j}^{*}(s_{i,j}^{*})
=
\arg\min_{u_{i,j}}
\psi_{i,j}\!\left(z_{i,j},u_{i,j},s_{i,j}^{*},L_{c,i}^{*}\right),
\end{equation}
subject to the associated system dynamics and boundary conditions.  Thus, the optimal policy of the original hybrid control problem is obtained by combining the high-level optimal resource-allocation decisions with the recovered low-level control inputs.

  \begin{figure*}[t!]
\subfigure[\centering Cost function with test accuracy (MNIST, IID, ResNet-50).] {
\centering\includegraphics[width=4.1cm]{ 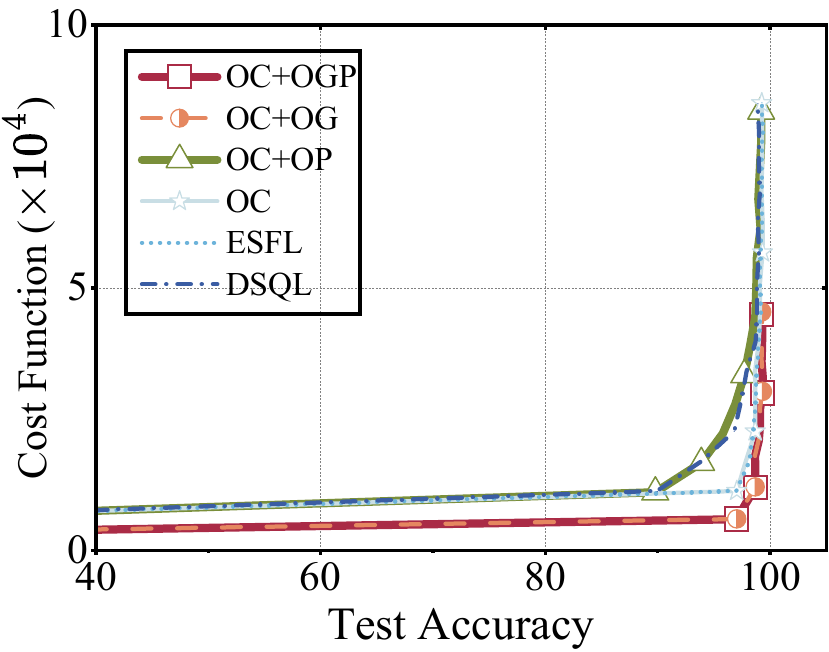}
	}
\subfigure[\centering Cost function with test accuracy (MNIST, non-IID, ResNet-50).] {
\centering\includegraphics[width=4.1cm]{ 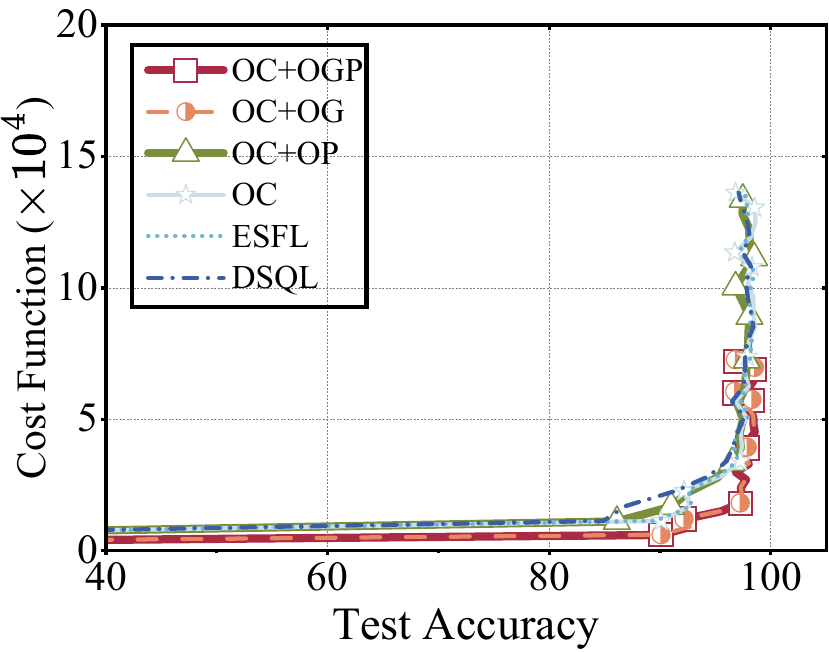}
	}
    \subfigure[\centering Cost function with test accuracy (CIFAR-10, IID, ResNet-50).] {
\centering\includegraphics[width=4.1cm]{ 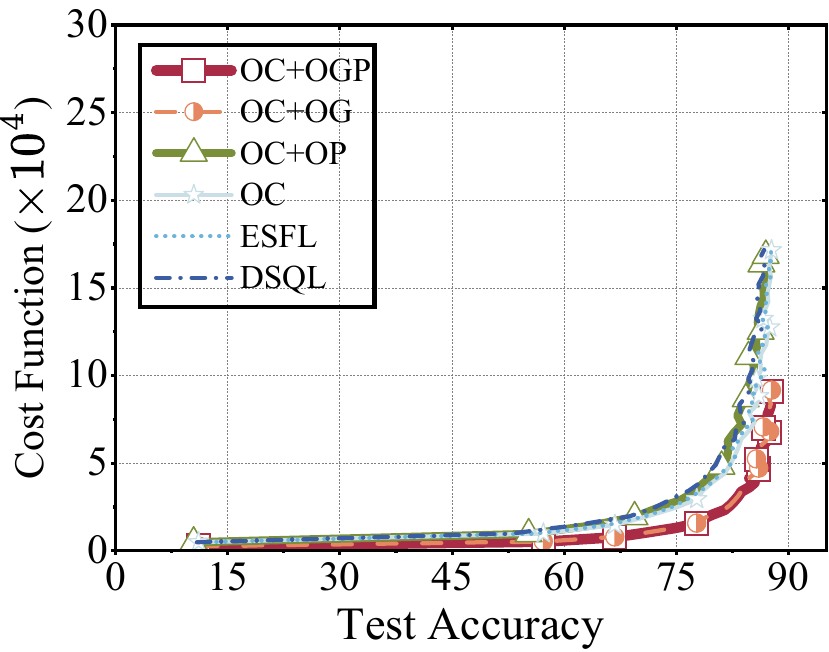}
	}
    \subfigure[\centering Cost function with test accuracy (CIFAR-10, non-IID, ResNet-50).] {
\centering\includegraphics[width=4.1cm]{ 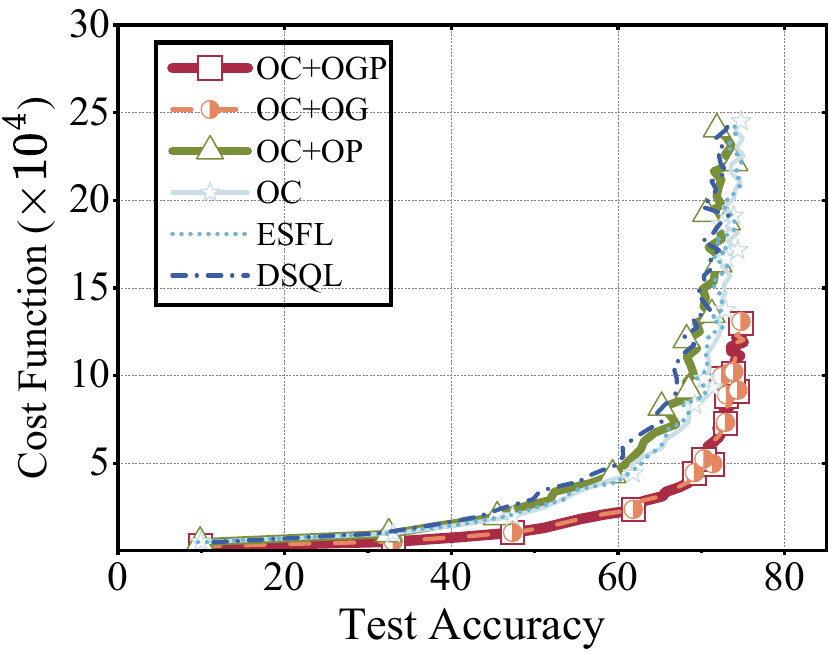}
	}
    \subfigure[\centering Cost function with test accuracy (MNIST, IID, VGG-16).] {
\centering\includegraphics[width=4.1cm]{ 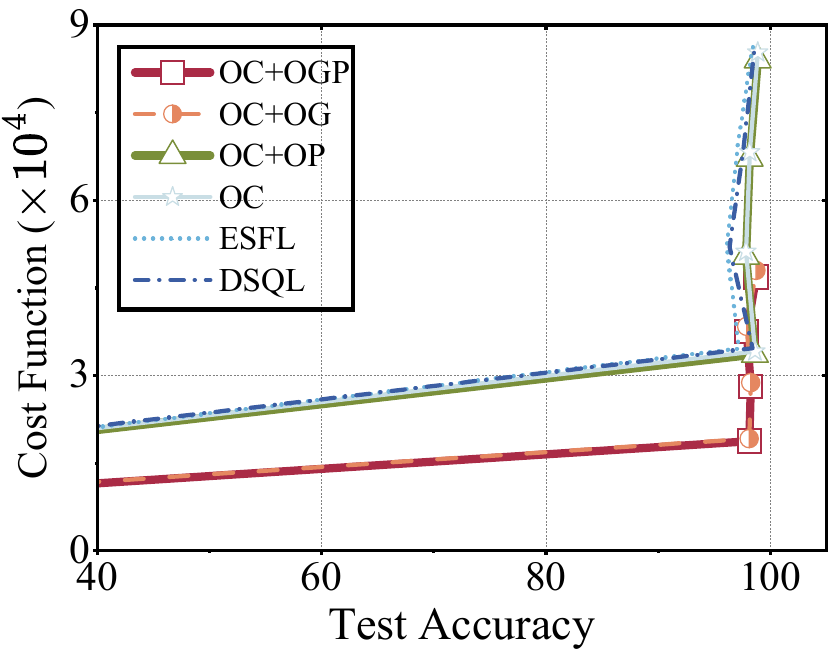}
	}
\subfigure[\centering Cost function with test accuracy (MNIST, non-IID, VGG-16).] {
\centering\includegraphics[width=4.1cm]{ 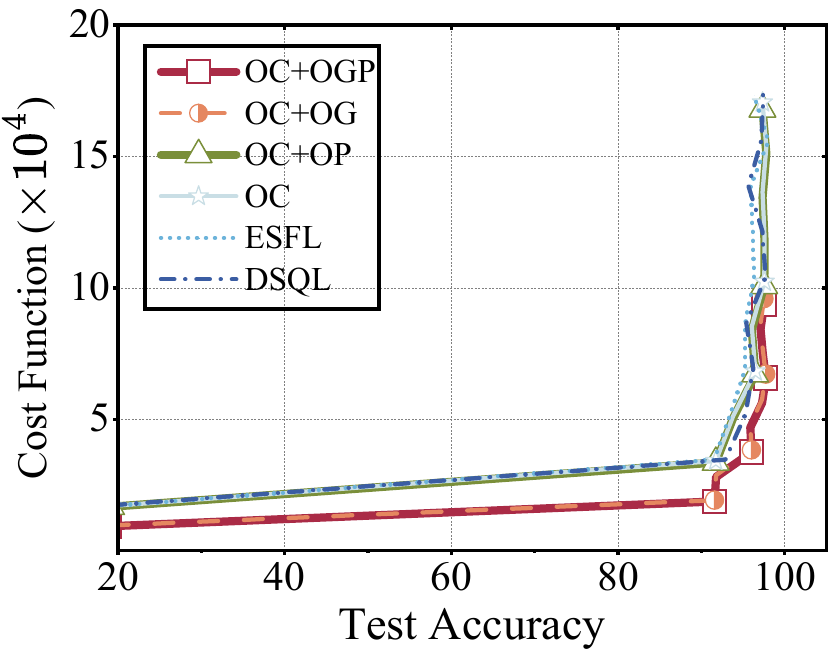}
	}
    \subfigure[\centering Cost function with test accuracy (CIFAR-10, IID, VGG-16).] {
\centering\includegraphics[width=4.1cm]{ 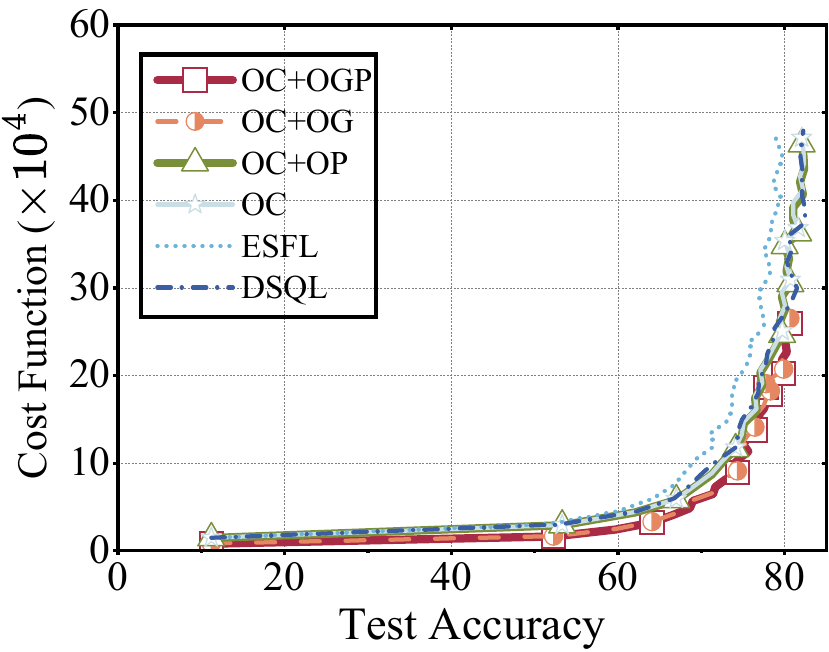}
	}
    \subfigure[\centering Cost function with test accuracy (CIFAR-10, non-IID, VGG-16).] {
\centering\includegraphics[width=4.1cm]{ 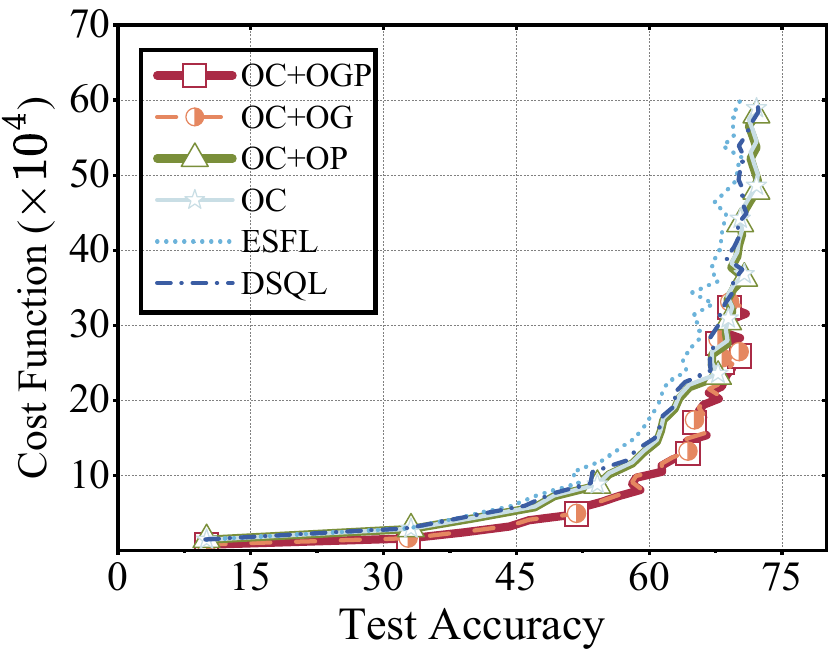}
	}
			\centering\caption{ Evaluating training performance on CIFAR-10 and MNIST using ResNet-50 and VGG-16 ($\lambda=100$).} \label{Training_curves}
\end{figure*}

\section{Simulations}\label{simulations}
\subsection{Simulation Setup}
We use the datasets MNIST \cite{6296535} and CIFAR-10~\cite{lecun1998mnist} to evaluate the performance of the proposed method  for the models ResNet-50~\cite{he2015deepresiduallearningimage} and VGG-16~\cite{simonyan2015deepconvolutionalnetworkslargescale} under both IID and non-IID data distributions.  
Notably, for the non-IID setting, each client is assigned a primary class from which approximately 70\% of its samples are drawn, while the remaining samples are sourced from other classes.
We consider a network of \(N=20\) clients, unless specified otherwise. Each client uses a mini-batch size of \(32\) with a learning rate \(\gamma=0.001\). 
 The algorithms are run on a workstation equipped with an AMD Ryzen Threadripper PRO 5975WX and NVIDIA GeForce RTX 4090.

 In the experiments, the tradeoff coefficient is set to $\lambda=100$ to balance training latency and energy consumption. The client-side computation capability is heterogeneous, with $n_i$ uniformly selected from $[0.2,3]\times10^{4}$ FLOPs/cycle, while the edge server is configured with $n_{\mathrm S}=3\times10^{4}$ FLOPs/cycle. The computation-energy coefficients are set to $G_i=1\times10^{-26}\,\mathrm{W/(cycle/s)^3}$ for clients and $G_{\mathrm S}=2\times10^{-26}\,\mathrm{W/(cycle/s)^3}$ for the server. 
For the NVIDIA Jetson Orin Nano client platform, the computing frequency is configured within the measured graphics-clock range, namely from $f_i^{\min}=50$ MHz to $f_i^{\max}=625$ MHz. For the NVIDIA GeForce RTX 4090 server platform, the computing frequency is configured within the supported graphics-clock range measured on our testbed, namely from $f_{\mathrm S}^{\min}=210$ MHz to $f_{\mathrm S}^{\max}=3.135$ GHz~\cite{jetson_orin_nano_spec,nvidia_smi_doc}.
 For wireless communication, the uplink and downlink bandwidths are set to $B_i^{\mathrm U}=100$ MHz and $B_i^{\mathrm D}=200$ MHz, respectively. The transmit power of each client and the edge server are bounded within $[20,33]$ dBm. 
The uplink channels are modeled as independent Rayleigh fading channels, where the average channel power gain of client $i$ is denoted by $|h_i|^2$ and generated within $[0.1,1]\times10^{-6}$. Similarly, the average downlink channel power gain is denoted by $|h_{\mathrm S}|^2$ and fixed at $1\times10^{-6}$~\cite{9461628}. The noise power spectral density is set to $N_0=3.98\times10^{-21}\,\mathrm{W/Hz}$. 
 For the reader's convenience, the detailed simulation parameters are summarized in Table \ref{tab:simulation_parameters}.

To demonstrate the advantages of our proposed method ``Optimal Cut and Optimal GPU Frequency Scaling and Power Control'' (\textbf{OC + OGP}), we compare it with five benchmarks: 1) ``Optimal Cut and Optimal GPU Frequency Scaling'' (\textbf{OC + OG}), which optimizes the model partition point and GPU frequency; 2) ``Optimal Cut and Optimal Power Control'' (\textbf{OC + OP}), which jointly optimizes the model partition point and transmit power; 3) 
``Optimal Cut'' (\textbf{OC}), which optimizes only the model partition;  4) efficient split federated learning (\textbf{ESFL})~\cite{10522472}, which aims to reduce training latency by optimizing client-side workload and server-side workload;
and 5) dynamic-step Q-Learning method (\textbf{DSQL})~\cite{10526451}, which employs a reinforcement learning method to reduce the weighted sum of training time and energy consumption. 
In our experiments, the total cost is defined as the cumulative cost incurred until the model reaches 70\% test accuracy on CIFAR-10 under both IID and non-IID data distributions.

 \begin{table}[t]
\small
\centering
\caption{Simulation Parameters}
\label{tab:simulation_parameters}
\begin{tabular}{|c|c|c|c|}
\hline
\textbf{Parameter} & \textbf{Value} & \textbf{Parameter} & \textbf{Value} \\
\hline
$C$ & $20$ &$N_0$   &  \makecell{$3.98\times 10^{-21}$ \\$\mathrm{W/Hz}$}  \\
\hline
$J$ & $5$ &  $\lambda$ & $100$ \\
\hline
$n_i$ & \makecell{$[0.2,3]\times 10^{4}$\\ $\mathrm{FLOPs/cycle}$}  & $n_{\mathrm S}$ & \makecell{$3\times 10^{4}$\\$\mathrm{FLOPs/cycle}$} \\
\hline
$G_i$ &
\makecell{$1\times10^{-26}$\\$\mathrm{W/(cycle/s)^3}$} &
$G_{\mathrm S}$ &
\makecell{$2\times10^{-26}$\\$\mathrm{W/(cycle/s)^3}$} \\
\hline
$f_{i}^{\max}$ & $625$ $\mathrm{MHz}$ & $f_{i}^{\min}$ & $50$ $\mathrm{MHz}$ \\
\hline
$f_{\mathrm S}^{\max}$ & $3.135$ $\mathrm{GHz}$ & $f_{\mathrm S}^{\min}$ & $210$ $\mathrm{MHz}$ \\
\hline
$B_i^{\mathrm{U}}$ & $100$ $\mathrm{MHz}$ & $B^{\mathrm{D}}_i$ & $200$ $\mathrm{MHz}$ \\
\hline
$P_i^{\max}$ & $33$ $\mathrm{dBm}$ & $P_i^{\min}$ & $20$ $\mathrm{dBm}$ \\
\hline
$P_{\mathrm S}^{\max}$ & $33$ $\mathrm{dBm}$ & $P_{\mathrm S}^{\min}$ & $20$ $\mathrm{dBm}$ \\
\hline
$|h_i|^2$ & $[0.1,1] \times 10^{-6}$ & $ |h_{\mathrm{S}}|^2$ & $1\times10^{-6}$ \\
\hline
\end{tabular}
\end{table}
 
\begin{figure}[t!]
        \subfigure[\centering Cost function vs. uplink bandwidth.] {
\centering\includegraphics[width=3.85cm]{ 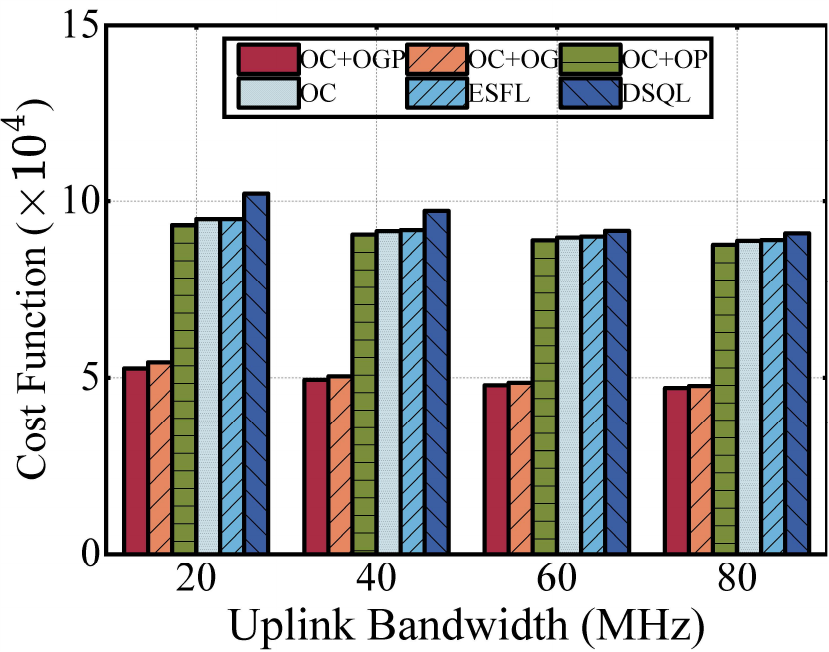}
	}
\subfigure[\centering Cost function vs. downlink bandwidth.] {
\centering\includegraphics[width=3.85cm]{ 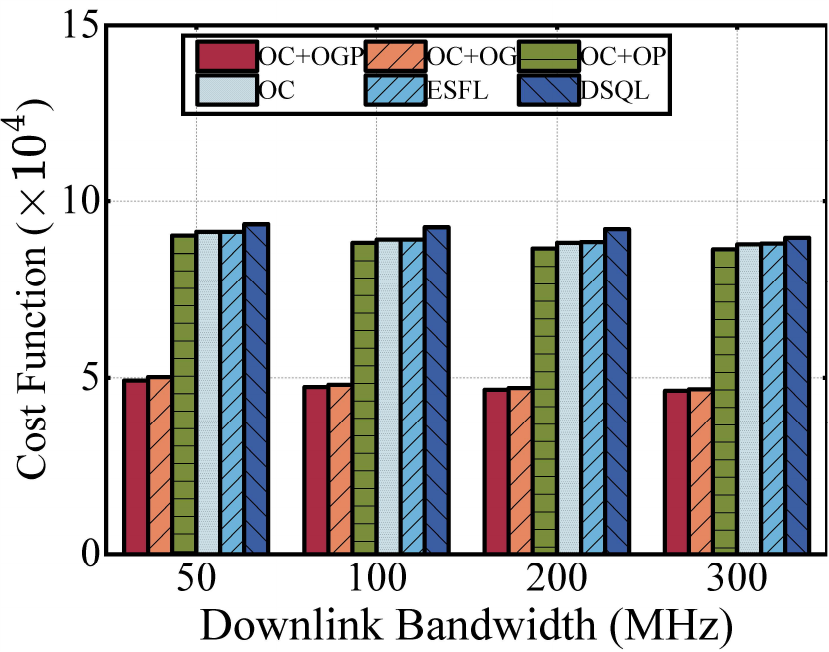}
	}
	\centering\caption{ The impact of uplink bandwidth and downlink bandwidth on cost function.} 
	\label{uplink_downlink}
\end{figure}

\begin{figure}[t!]
        \subfigure[\centering Cost function vs. client-side GPU frequency.] {
\centering\includegraphics[width=3.85cm]{ 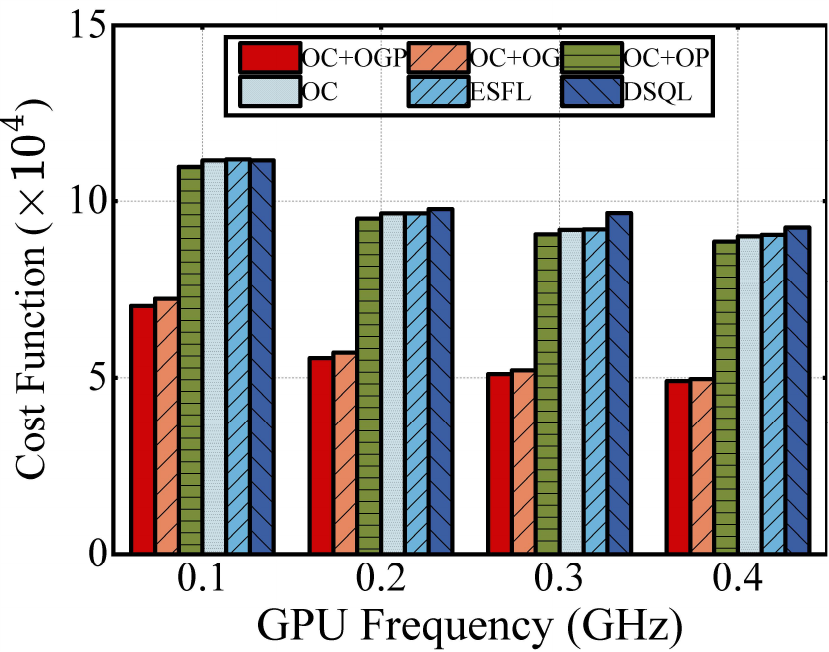}
	}
\subfigure[\centering Cost function vs. server-side GPU frequency.] {
\centering\includegraphics[width=3.85cm]{ 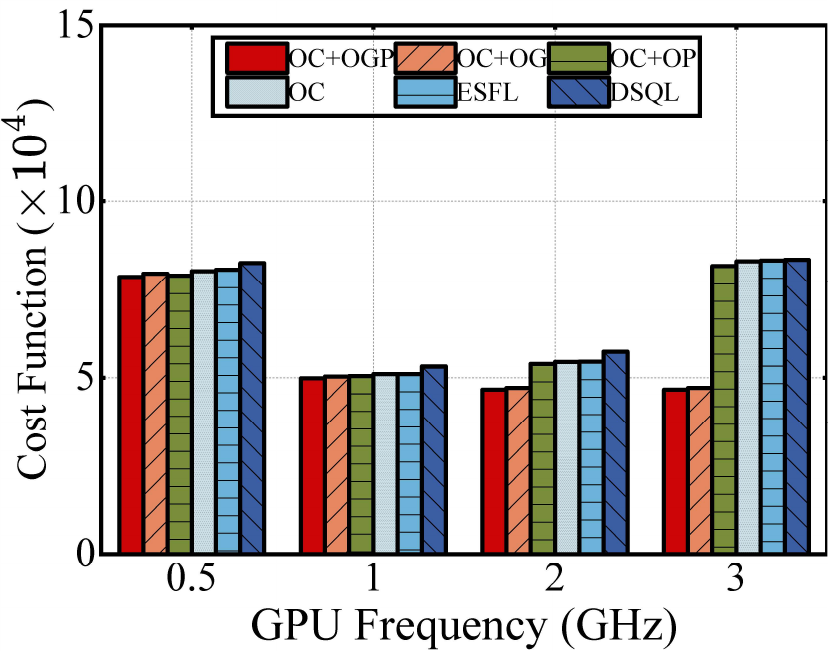}
	}
	\centering\caption{ The impact of GPU frequency on cost function.} 
		\label{compute}
\end{figure}

\begin{figure}[t!]
\subfigure[\centering Client-side computing uncertainty.] {
\centering\includegraphics[width=3.85cm]{ 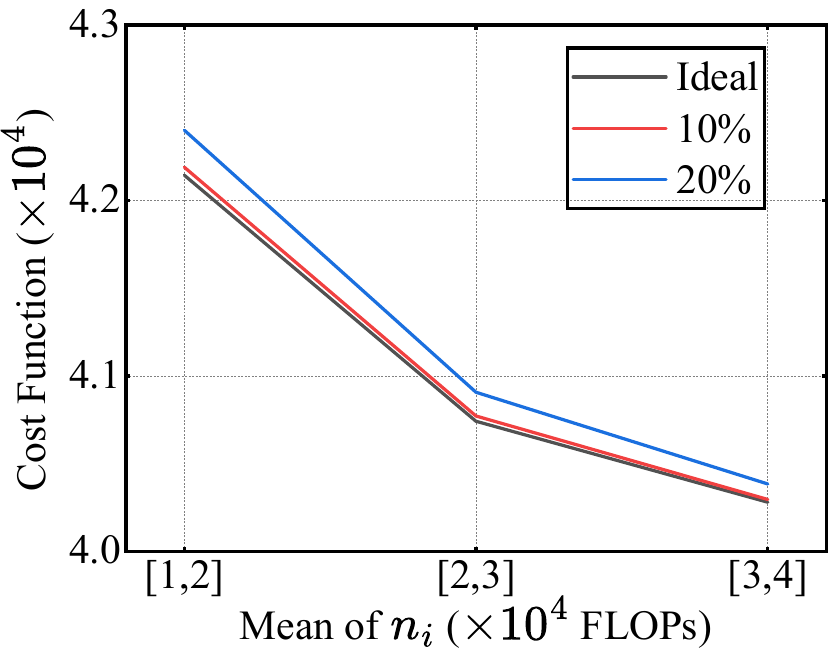}
	}
         \subfigure[\centering Server-side computing uncertainty.] {
\centering\includegraphics[width=3.85cm]{ 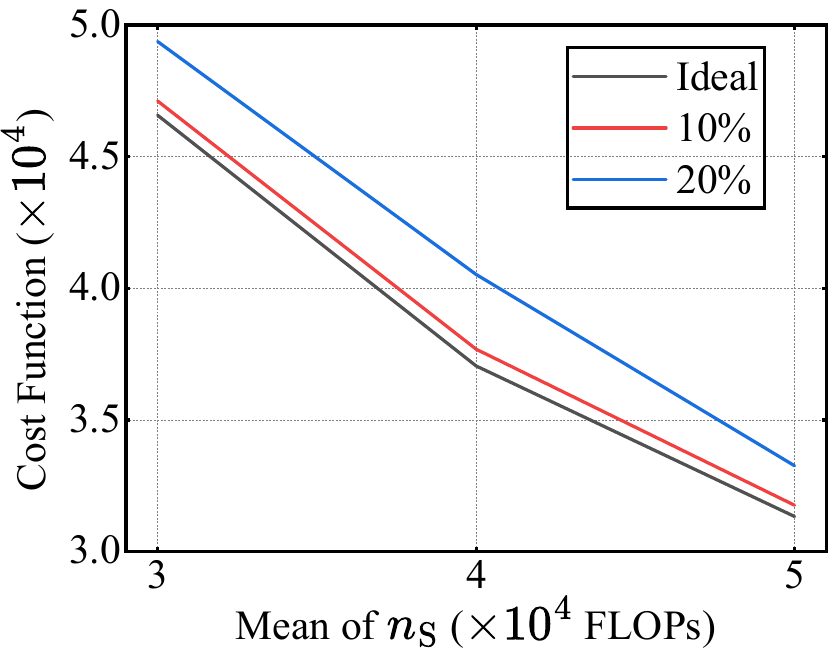}
	}
    \subfigure[\centering Uplink transmission uncertainty.] {
\centering\includegraphics[width=3.85cm]{ 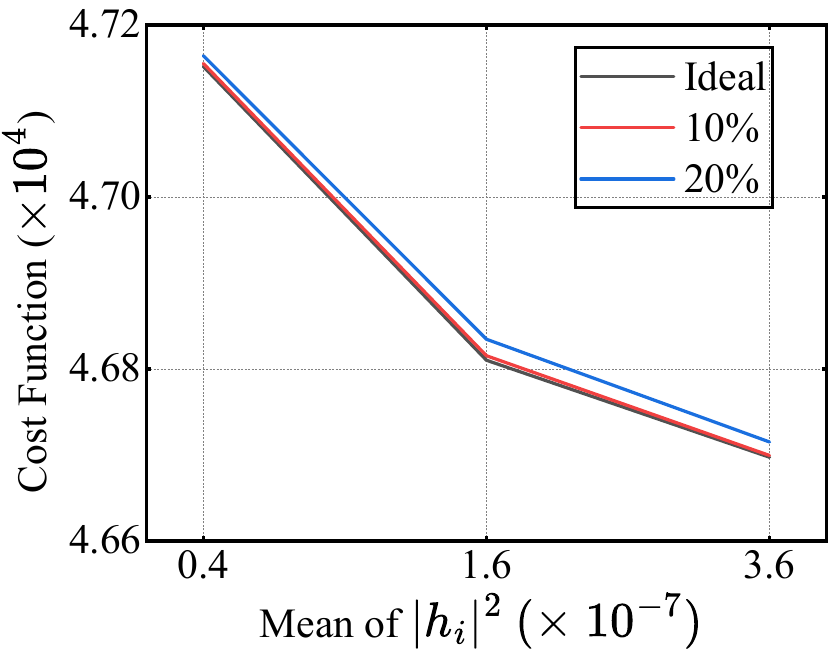}
	}
         \subfigure[\centering Downlink transmission uncertainty.] {
\centering\includegraphics[width=3.85cm]{ 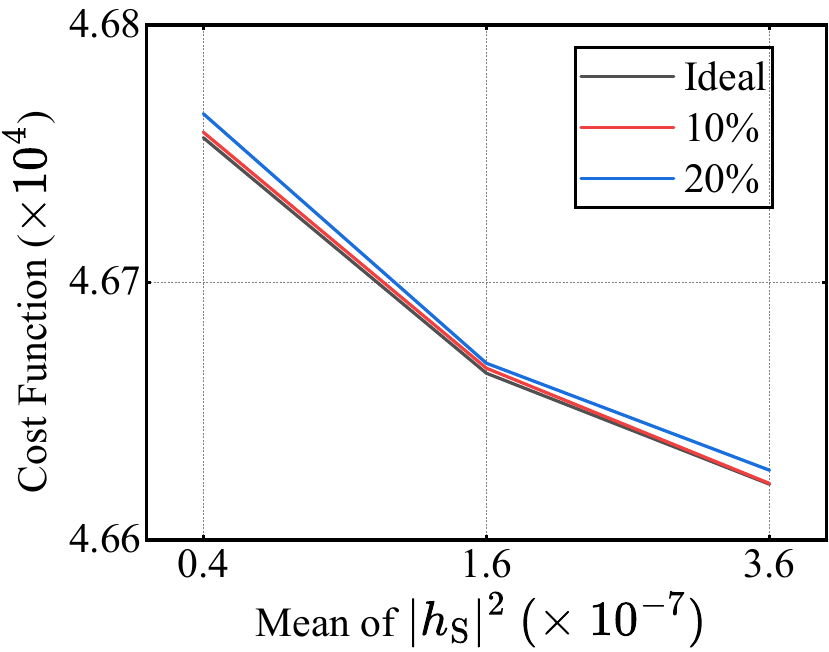}
	}
	\centering\caption{ The impact of network resource fluctuation on cost function.} 
		\label{flu}
\end{figure}

\begin{figure}[t!]
         \subfigure[\centering The impact of number of clients on cost function.] {
\centering\includegraphics[width=3.85cm]{ 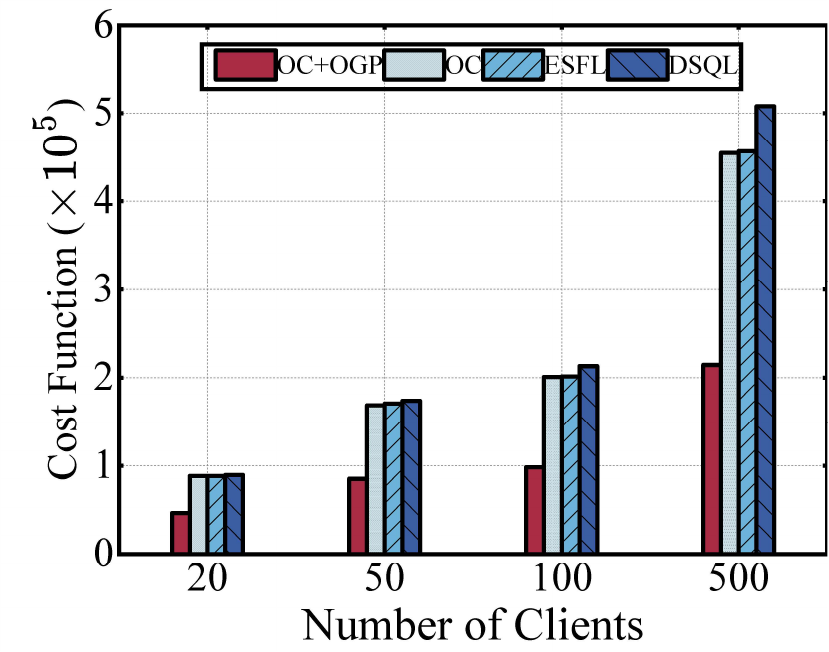}
	}
         \subfigure[\centering The impact of number of clients on running time.] {
\centering\includegraphics[width=3.85cm]{ 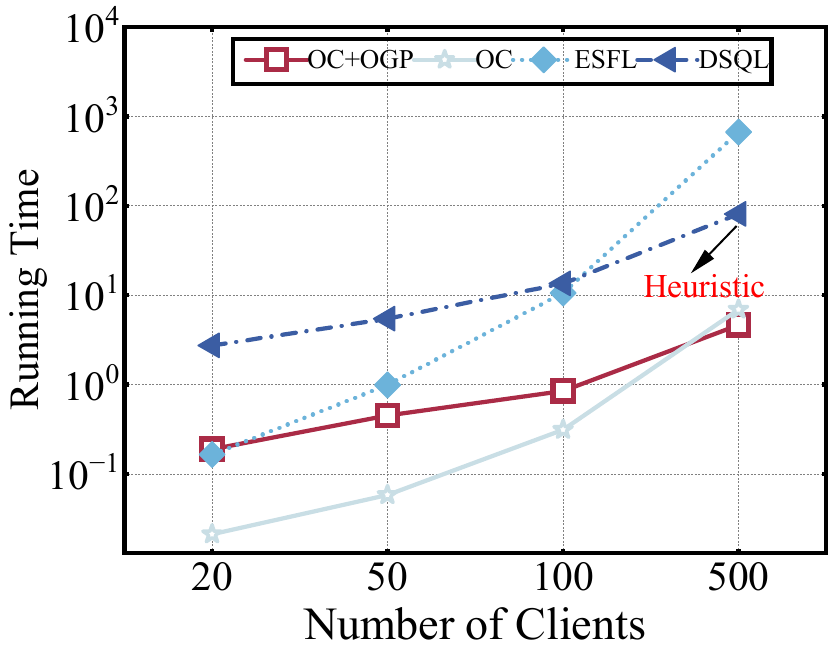}
	}
	\centering\caption{ The impact of number of clients on cost function and running time.} 
		\label{number_of_clients}
\end{figure}

 \subsection{Performance Comparison}
Fig.~\ref{Training_curves} evaluates the training performance by showing the cost function against test accuracy using ResNet-50 and VGG-16 models on both MNIST and CIFAR-10 datasets. The proposed \textbf{OC+OGP} method consistently achieves the lowest cost for attaining a given test accuracy, demonstrating that the proposed method remains effective across different datasets, data distributions, and model architectures. It is also observed that the cost generally increases with the test accuracy, and the increase becomes faster as the test accuracy approaches convergence.

Fig.~\ref{uplink_downlink} illustrates the impact of varying uplink and downlink bandwidths on the total cost function. As the uplink or downlink bandwidth increases, the overall cost gradually decreases for all compared baselines, since larger bandwidth improves transmission efficiency and shortens the communication time. Moreover, the proposed \textbf{OC+OGP} consistently achieves the lowest cost under all evaluated bandwidth settings, indicating its effectiveness under different communication resource conditions.
Fig.~\ref{compute} illustrates the impact of the maximum allowable GPU frequency on the cost function. In Fig.~\ref{compute}(a), increasing the maximum allowable client-side GPU frequency continuously reduces the cost for all compared methods, which indicates that, within the tested range, enlarging the feasible client-side frequency set is beneficial for cost reduction.  Fig.~\ref{compute}(b) shows that enlarging the maximum allowable server-side GPU frequency does not necessarily reduce the cost. For the proposed \textbf{OC+OGP} and \textbf{OC+OG}, the cost remains unchanged once the feasible range already includes the server-side GPU frequency that gives the lowest cost, since the proposed method chooses the cost-minimizing frequency instead of simply using the maximum available frequency. By contrast, the baseline schemes incur higher cost at large server-side frequency values because they operate at the maximum allowable GPU frequency.

In edge networks, computing resources and channel conditions may fluctuate during training, resulting in a mismatch between the measured parameters used for optimization and the actual operating conditions. To evaluate the impact of such uncertainty, we model the computing capabilities and average channel power gains as Gaussian random perturbations around their nominal values, with different coefficients of variation (CV)~\cite{10053757,yoo2024modeling}. Specifically, we consider uncertainty in four key parameters: $n_i$, $n_{\mathrm S}$, $|h_i|^2$, and $|h_{\mathrm S}|^2$, where $n_i$ and $n_{\mathrm S}$ denote the numbers of GPU FLOPs per cycle of client $i$ and the edge server, respectively, and $|h_i|^2$ and $|h_{\mathrm S}|^2$ denote the average uplink and downlink channel power gains, respectively. In our experiments, the CV is set to 10\% and 20\% to represent moderate and high uncertainty levels, respectively. For each CV level, we repeat the simulation over 200 random realizations. Fig.~\ref{flu} shows that the objective value increases as the CV grows, due to the mismatch between the optimized parameters and the actual conditions. Nevertheless, the proposed method consistently maintains competitive performance, demonstrating good robustness under practical resource uncertainty.

 \subsection{Running Time Comparison}
Fig.~\ref{number_of_clients} illustrates the impact of the number of clients on both the cost function and the running time. As shown in Fig.~\ref{number_of_clients}(a), the cost function generally increases as the number of clients grows for all methods compared.  The proposed method \textbf{OC+OGP} achieves a lower cost than baselines across all the numbers of clients tested. Moreover, the proposed \textbf{OC} method achieves the same cost as ESFL, while consistently outperforming DSQL.
Fig.~\ref{number_of_clients}(b) further shows that the proposed efficient \textbf{OC+OGP} and \textbf{OC}  methods require lower running time compared with ESFL and  DSQL. To sum up, the results verify the theoretical analysis that the proposed \textbf{OC} attains the optimal solution  with polynomial-time complexity. In addition, it shows that \textbf{OC+OGP} remains polynomial-time as the number of clients increases, which indicates that the proposed method is able to obtain high-quality approximate solutions efficiently.


\subsection{Impact of Hyperparameter $\lambda$}
\begin{figure}[t!]
        \subfigure[\centering Energy minimization ($\lambda =0$).] {
\centering\includegraphics[width=3.85cm]{ 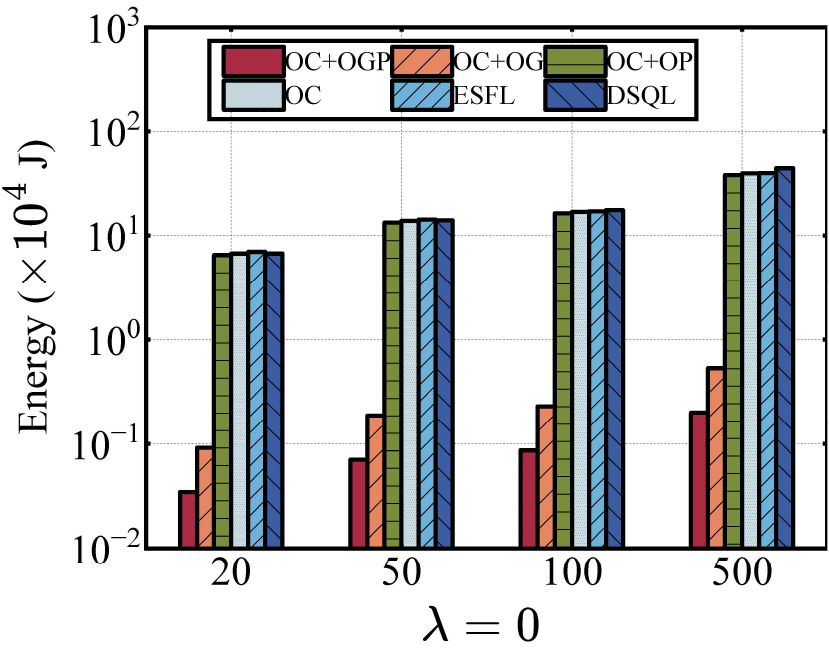}
	}
\subfigure[\centering Latency minimization ($\lambda =\infty$).] {
\centering\includegraphics[width=3.85cm]{ 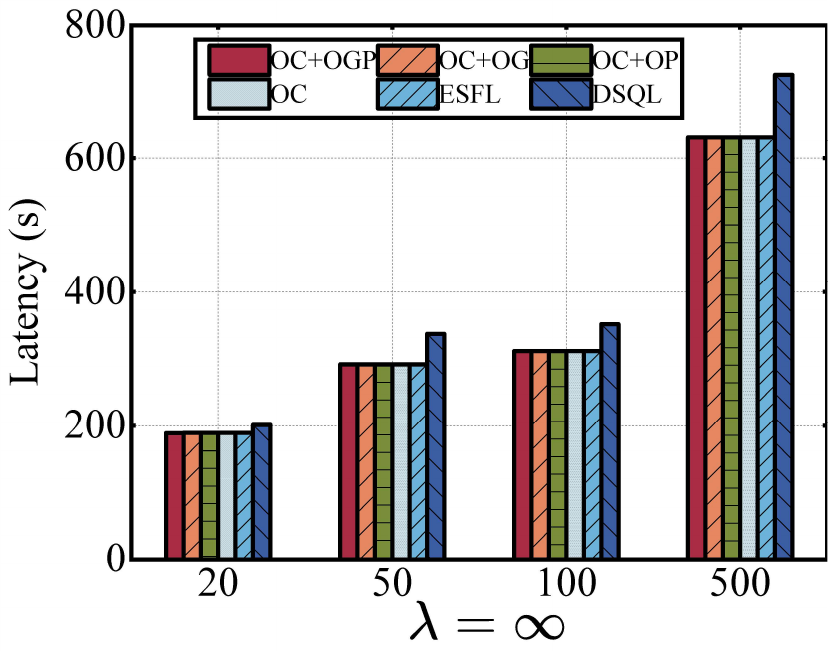}
	}
	\centering\caption{ Performance comparison under $\lambda =0$ and $\lambda =\infty$.} 
	\label{lambda-0-in}
\end{figure}
Fig.~\ref{lambda-0-in} further examines the two extreme cases. When 
$\lambda=0$, i.e., under energy-only optimization, the proposed \textbf{OC+OGP} achieves the lowest energy among all compared methods. When $\lambda=\infty$, i.e., under latency-only optimization, \textbf{OC+OGP} achieves nearly the same latency as \textbf{OC+OG}, \textbf{OC+OP}, \textbf{OC}, and ESFL, while remaining lower than DSQL. 
This is because the latency-only objective drives the optimized GPU frequencies and transmit powers to their maximum feasible values, so the additional resource-control degrees of freedom in \textbf{OC+OGP}, \textbf{OC+OG}, and \textbf{OC+OP} do not further reduce latency beyond the cut-layer optimization. These results demonstrate that the proposed method can adapt to different optimization preferences and maintain favorable performance under both energy-oriented and latency-oriented settings.


\section{Conclusion}\label{conclusion}
In this paper, we investigated the joint optimization of model splitting and resource allocation for SFL in resource-constrained edge systems. We formulated the resulting design as a hybrid optimization problem that captures the coupled effects of computation, communication, and model-splitting decisions. For the special model-splitting case, we developed an optimal polynomial-time solution. For the general joint optimization problem, we reformulated it as a two-dimensional master problem and developed a computationally efficient approximation method with explicit theoretical guarantees. Numerical results demonstrated the effectiveness of the proposed approach and its advantage over existing baselines. Overall, the proposed framework provides an efficient method for resource optimization in SFL under the energy--latency tradeoff.

\bibliographystyle{IEEEtran}
\bibliography{citationlist}

\end{CJK}
\end{document}